\newtheorem{theorem}{Theorem}
\newtheorem{lemma}{Lemma}
\newtheorem{corollary}{Corollary}[theorem]
\DeclareMathOperator*{\argmin}{arg\,min}
\newcommand\overmat[2]{%
			\smash{\overbrace{{%
			\begin{matrix}#2\end{matrix}}}^{\text{\color{black}#1}}}\vphantom{#2}}
\newcommand\undermat[2]{%
	\smash{\underbrace{{%
				\begin{matrix}#2\end{matrix}}}_{{\color{black}#1}}}\vphantom{#2}}	
\title{An Efficient Planar Bundle Adjustment Algorithm}
\author{Lipu Zhou\thanks{e-mail: lzhou@magicleap.com} \\
	\scriptsize Magic Leap
	\and Daniel Koppel\thanks{e-mail:dkoppel@magicleap.com} \\
	\scriptsize Magic Leap
	\and Hui Ju\thanks{e-mail:hju@magicleap.com}	\\
	\scriptsize Magic Leap
	\and Frank Steinbruecker\thanks{e-mail: steinbruecker@magicleap.com } \\
	\scriptsize Magic Leap
	\and Michael Kaess \thanks{e-mail: kaess@cmu.edu} \\
	\scriptsize Carnegie Mellon University
}
\abstract{ This paper presents an efficient algorithm for the least-squares problem using the point-to-plane cost, which  aims to jointly optimize  depth sensor poses and plane parameters  for  3D  reconstruction. We call this least-squares problem \textbf{Planar Bundle Adjustment} (PBA), due to the similarity between this problem and the original Bundle Adjustment (BA) in visual reconstruction. As planes  ubiquitously exist in the man-made environment, they are generally used as landmarks in  SLAM algorithms for various depth sensors.  PBA is important to reduce drift and improve the  quality of the map.  However, directly adopting the well-established BA framework in visual reconstruction  will result in a very inefficient solution for PBA. This is because a 3D point only has one observation at a camera pose. In contrast, a depth sensor can record hundreds of points in a plane at a time, which results in a very large  nonlinear least-squares problem even for a small-scale space. The main contribution of this paper is an efficient solution for the PBA problem using the point-to-plane cost. We introduce a reduced Jacobian matrix and a reduced residual vector, and prove that they can replace the original Jacobian matrix and residual vector in the generally adopted Levenberg-Marquardt (LM) algorithm. This   significantly reduces the computational cost. 
Besides, when planes are combined with other features for 3D reconstruction, the reduced Jacobian matrix and  residual vector can also replace the corresponding parts derived from planes. Our experimental results  show that our algorithm can significantly reduce the computational time compared to the solution using the traditional BA framework. In addition, our algorithm is faster,  more accurate, and more robust to initialization errors compared to the start-of-the-art solution using the plane-to-plane cost \cite{geneva2018lips}.
} 
\begin{document}



\firstsection{Introduction}

\maketitle

Simultaneous localization and mapping (SLAM)   is important for  Augmented Reality (AR) systems, and many other computer vision and robotics applications. Generally, visual SLAM algorithms \cite{Klein2009ParallelTA,schops2014semi,liu2016robust,mur2017orb,liu2018ice} yield  sparse or semi-dense maps. It is difficult for visual SLAM algorithms to recover the structure of textureless  areas, such as white walls. Recently, depth sensors have become available for AR glasses and smart phones. This has made possible the building of  dense 3D maps in real time \cite{salas2014dense,kaess2015simultaneous,hsiao2017keyframe,hsiao2018dense,huang2019optimization}, which is desirable, not only for providing the necessary information for localization, but also the   information for environment understanding,  such as 3D  object classification and semantic segmentation \cite{qi2017pointnet},  required in most of today's AR applications. As planes    ubiquitously exist in  man-made scenes,  they are generally exploited as features in SLAM algorithms for  various depth sensors  \cite{pathak2010fast,lee2012indoor,trevor2012planar,salas2014dense,kaess2015simultaneous,yang2016pop,hsiao2017keyframe,zhang2017low,hsiao2018dense,kim2018linear,huang2019optimization,yang2019monocular}. In recent work \cite{li2019robust}, planes are also used in a novel Visual-Inertial Odometry (VIO) system. In addition,  planes  are important locations for an AR system to display virtual information as demonstrated in \cite{salas2014dense}. Thus planes play an  important role in AR applications. 
This paper focuses on optimizing the plane model of an environment.

It is known that bundle adjustment (BA)  is  crucial  for visual reconstruction  to generate a high quality and globally consistent 3D map. Thus BA has been extensively investigated in the literature \cite{triggs1999bundle,hartley2003multiple,lourakis2009sba,agarwal2010bundle,grisetti2010tutorial,wu2011multicore,kaess2012isam2,zach2014robust,schonberger2016structure,zhang2017distributed,liu2018ice}.
Analogously to the visual reconstruction case, we face a nonlinear least-squares problem whose goal is to obtain optimal  poses and plane parameters for depth data.   
Here we name this least-squares problem as \textbf{Planar Bundle Adjustment} (PBA), due to the similarity between  this  least-squares problem and BA.  Although  BA for  visual  reconstruction has been well studied in the literature, little research has been done on the  PBA.  Thus, this raises the demand for  studying  the  PBA problem for depth sensors.

The PBA is the problem of jointly optimizing parameters of planes and sensor poses. Although the plane is seemingly the counterpart of the point in BA for visual reconstruction, there exists a significant difference between them. In the literature, the plane-to-plane distance based on  the rigid-body transformation for plane parameters is generally employed to construct the cost function. However, this cost function may introduce bias which may result in  a suboptimal solution as described in \autoref{sect_related-work}. Thus this  paper adopts the point-to-plane distance   to construct the cost function.   A 3D point can only yield one observation for each camera pose. However, as a plane is an infinite object, one recording of a depth sensor can provide many points as a partial observation of a plane. Thus a single depth sensor  recording can generate many constraints on planes and poses. Therefore, directly adopting the original visual BA framework to the planar case will result in a large-scale nonlinear least-square  problem even for a small scene, which incurs high computational cost and memory consumption. This paper addresses this problem.

The main contribution of this paper is an efficient planar bundle adjustment algorithm.
The key point of our algorithm is to explore the  special structure of PBA. Based on the special structure of PBA, we introduce a reduced Jacobian matrix and a reduced residual vector. We prove that no matter how many points of a plane are recorded by a depth sensor, the derived Jacobian matrix and residual vector can  be replaced by the reduced  ones  in the Levenberg-Marquardt (LM) algorithm \cite{levenberg1944method,more1978levenberg}. The reduced Jacobian matrix and  residual vector have  fixed sizes that are  much smaller  than the original ones. This significantly  reduces computational cost and  memory usage. More generally, when 3D reconstruction applications exploit planes together with other features such as points \cite{taguchi2013point,hosseinzadeh2018structure,wang2019submap,grant2019efficient,yang2019tightly}, the same reduction technique can be applied to the blocks inside the  Jacobian and the residual, which correspond to the planar constraints.

\section{Related Work and Theoretical Background} \label{sect_related-work}
Planes are widely adopted as landmarks  in  SLAM algorithms for depth sensors. The related topics include plane detection,  matching,   3D registration  and joint optimization with poses and planes, \textit{etc}. 
The research on the joint optimization problem is relative small. This paper  focuses on this problem. In this section, we introduce the related work and the theoretical background for the  optimization problem. In the following description, we use italic, boldfaced lowercase and boldfaced uppercase letters to represent scalars, vectors and matrices, respectively.

\subsection{Cost Functions for the Plane Correspondence}
One requisite  step  of formulating a optimization problem is to construct the cost function. For the plane correspondence, there are two cost functions generally used in the literature, \textit{i.e.}, \textbf{plane-to-plane }and \textbf{point-to-plane}. 

\textbf{Plane-to-plane cost} is based on the rigid-body transformation for the  plane parameters. 
Specifically, suppose  a plane is represented by the Hesse normal form as $\bm{\pi} = \left[ \mathbf{n}; d\right] $, where $\mathbf{n}$ is the plane normal with  $\left\| \mathbf{n}\right\|_2 =1 $  and $d$ is the negative distance from the coordinate system origin to the plane. Assume the rotation and translation from a depth  sensor's local coordinate system to a global coordinate system are $\mathbf{R}$ and $\mathbf{t}$, respectively. Let $\bm{\pi}_s $ represent the parameters of a plane in the depth sensor coordinate system,   which are estimated from a set of points $\mathbb{P}$  in the depth sensor measurements,  and $\bm{\pi}_g$  denote  the same plane's parameters in the global coordinate system. Then the relation between $\bm{\pi}_s $  and $\bm{\pi}_g $  can be described as \cite{hartley2003multiple}
\begin{equation} \label{equ_pl2pl}
	\bm{\pi}_s = \mathbf{T}^{T}\bm{\pi}_g, \ \mathbf{T} = 
	\begin{bmatrix}
		\mathbf{R} & \mathbf{t} \\
		\mathbf{0} & 1
	\end{bmatrix}.
\end{equation}
$\bm{\pi}_g $  and $\mathbf{T}$ are  the variables that we want to estimate. We can have a general form  of the  plane-to-plane residual as below
\begin{equation} \label{equ_pl2pl_diff}
	\bm{\pi}_s  \ominus \mathbf{T}^{T}\bm{\pi}_g,
\end{equation}
where $\ominus$ is a function to measure the difference between $\bm{\pi}_s $ and $\mathbf{T}^{T}\bm{\pi}_g$. Kaess \cite{kaess2015simultaneous}  represents  planes as quaternions, and defines $\ominus$ by measuring the difference between two quaternions. Geneva \textit{et al.} \cite{geneva2018lips} introduce the closest point (CP) vector, \textit{i.e.} $d\mathbf{n}$,  to parameterize a plane.  They define $\ominus$ as the difference between two CP vectors.

The   above equation (\ref{equ_pl2pl_diff}) can be directly used to construct  cost function for jointly tuning poses and planes \cite{huang2019optimization}. But the relative plane formulation introduced by Kaess \cite{kaess2015simultaneous}  converges faster.  In the relative plane formulation, a plane is  expressed relative to  the first pose  that observes it. This formulation is adopted later in \cite{hsiao2017keyframe,hsiao2018dense} for  global joint optimization of poses and planes when loop closure occurs.  Geneva \textit{et al.} \cite{geneva2018lips} present  a similar relative plane formulation as \cite{kaess2015simultaneous}. They  introduce the CP vector to represent   the plane, which shows improved accuracy and faster convergence compared to  the plane parameterization using quaternion introduced in \cite{kaess2015simultaneous}. 

\textbf{Point-to-plane cost} is the squared  distance from a point to a plane. 
As mentioned above, $\mathbb{P}$ is the point set of a plane observed by a depth sensor. Assume $\mathbf{p}_i \in \mathbb{P}$ and $\bm{\pi}_g = \left[ \mathbf{n}_g ; d_g\right] $. The signed distance from $\mathbf{p}_i $ to $\bm{\pi}_g$ has the form as  
\begin{equation} \label{equ_p2pl}
	\delta = \mathbf{n}_g \cdot \left(\mathbf{R} \mathbf{p}_i +\mathbf{t}\right) + d_g,
\end{equation}
where $\cdot$ represents the  dot product. 
The point-to-plane cost $\delta^2$  is generally employed to calculate the pose between a local depth sensor point cloud  and a  global point cloud \cite{newcombe2011kinectfusion,zhang2017low,hsiao2017keyframe,zhou2020icra} 
rather than to  jointly optimize poses and plane parameters. This is because there generally exists a large number of points in $\mathbb{P}$, 
which leads to a very large-scale least-squares problem, if this particular cost function is used. Thus, it is  seldom adopted in the SLAM algorithm for global poses and planes joint optimization. 

\textbf{plane-to-plane vs. point-to-plane} \quad As described in \cite{hartley2003multiple}, the cost function will impact  the accuracy of the solution. 
The point-to-plane cost   is well defined. It is the squared distance from $\mathbf{p}_i $ to $\bm{\pi}_g$, which   is invariant to rigid transformations. This means the cost  is invariant to the choice of  the global coordinate system. It only depends on  estimation errors of poses and planes.  However, the plane-to-plane cost based on plane parameter transformation (\ref{equ_pl2pl}) does not have this property. 
As demonstrated in \autoref{fig_pl2pl}(a) changing the  coordinate system will vary the plane-to-plane cost. Thus  the accuracy of the result of minimizing the plane-to-plane cost may depend on  the particular choice of  coordinate systems, which introduces uncertainty to the solution. 

Furthermore, the plane-to-plane cost may introduce bias. For example, given the ground truth of  the two poses and $\bm{\pi}_g$, the two local point clouds in  \autoref{fig_pl2pl}(b)   generate different plane-to-plane costs, although they yield the same point-to-plane costs. Plane $\bm{\pi}_{s}^{1}$  has smaller cost than $\bm{\pi}_{s}^{2}$ merely because of  the certain choice of the global coordinate system rather than the second pose error is larger. However, as plane $\bm{\pi}_{s}^{1}$  generates smaller cost than $\bm{\pi}_{s}^{2}$,  
this may cause  the optimization algorithm to change the second pose of the sensor to push  plane $\bm{\pi}_{s}^{2}$   toward plane $\bm{\pi}_{s}^{1}$  to minimize the plane-to-plane cost, which is prone to increase errors.

Our experimental results show that minimizing the point-to-plane cost results in a more accurate solution than  minimizing  the plane-to-plane cost. In addition, our results show that the point-to-plane cost is more robust to initialization errors.  However, using the point-to-plane cost yields a much larger scale least-squares problem. This paper presents an efficient solution to address this problem.

\begin{figure}[tb]
	\centering 
	\includegraphics[width=\columnwidth]{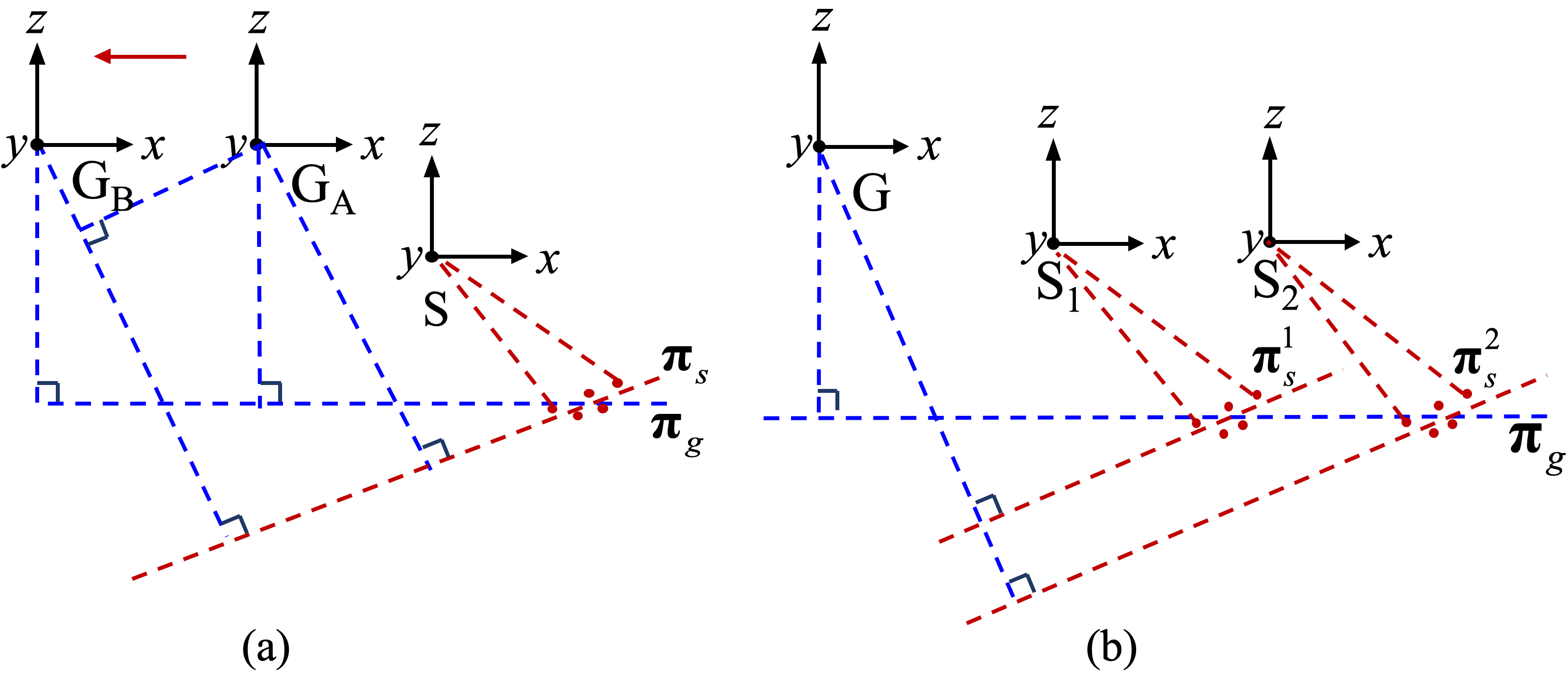}
	\caption{A schematic illustrating the potential problems of the plane-to-plane
		cost. (a) Plane-to-plane cost is not invariant to rigid transformations. When the coordinate system shifts from $\text{G}_A$ to $\text{G}_B$, the plane-to-plane cost for $\bm{\pi}_{s}\leftrightarrow \bm{\pi}_{g}$ changes. However, the point-to-plane cost is invariant.
		(b) Plane-to-plane cost may introduce bias. The two point clouds acquired from the global plane $\bm{\pi}_{g}$ at two poses ($\text{S}_1$ and $\text{S}_2$) have the same noise level. Their sums of squared distances to $\bm{\pi}_{g}$ are the same. Thus they have the same point-to-plane cost. However, the plane-to-plane costs for $\bm{\pi}_{s}^1 \leftrightarrow \bm{\pi}_{g}$  and $\bm{\pi}_{s}^2 \leftrightarrow \bm{\pi}_{g}$ are different, as  $\bm{\pi}_{s}^{1}$ and $\bm{\pi}_{s}^{2}$ have different parameters. If we consider the CP vector difference \cite{geneva2018lips}, $\bm{\pi}_{s}^1 $ will yield a smaller cost. This may cause the optimization algorithm to change the second pose  to push $\bm{\pi}_{s}^2 $ toward $\bm{\pi}_{s}^1 $ to reduce the plane-to-plane cost. This bias may result in less accurate results.}
	\label{fig_pl2pl} 
\end{figure}

\subsection{The Bundle Adjustment Problem}
3D reconstruction is a fundamental problem  with a large number of applications. BA is a crucial step for 3D reconstruction.  Let us first consider the  well-established BA framework for visual reconstruction. 

The  BA framework for  visual reconstruction is essentially a  nonlinear least-squares problem with sparse structure \cite{triggs1999bundle,wu2011multicore,agarwal2010bundle,lourakis2009sba,liu2018ice}. The  LM algorithm \cite{levenberg1944method,more1978levenberg} is generally adopted to solve this problem.  Let us begin with  a brief introduction of the LM algorithm for a general least-squares problem. 

Suppose that we have an  $n$-dimensional measurements $\mathbf{y}=\left[ y_1;y_2; \dots; y_n\right] $  modeled by function $ \mathbf{f}\left( \mathbf{x}\right) = \left[ f_1\left(\mathbf{x} \right); f_2\left(\mathbf{x}\right); \dots; f_n\left(\mathbf{x} \right) \right] $, where $\mathbf{x} \in \mathbb{R}^{m}$ is an $m$-dimensional model parameter vector which we seeks to estimate. The least-squares problem is to find the optimal $\hat{\mathbf{x}}$ which minimizes the sum of squared errors of $ \bm{\delta}\left( \mathbf{x}\right) =  \mathbf{f}\left( \mathbf{x}\right) - \mathbf{y}$, \textit{i.e.}, 
\begin{equation} \label{equ_lm}
\hat{\mathbf{x}} = \argmin_{\mathbf{x}}\frac{1}{2}\left\| \bm{\delta}\left( \mathbf{x}\right) \right\|_2^2.
\end{equation}
We use $\mathbf{J}\left( \mathbf{x}\right)$ to  represent  the Jacobian matrix of $\mathbf{f}\left( \mathbf{x}\right) $ at $\mathbf{x}$, 
whose $i$th row  and $j$th column element is $J_{ij}\left(\mathbf{x}\right) = \frac{\partial{f_i\left(\mathbf{x} \right) }}{\partial{x_j}}$, where $x_j$ is the $j$th element of $\mathbf{x}$.  To keep the expression simple in the following description, we only use the name of the function and do not explicitly include the variable of the function unless necessary.  

Given an initial estimation of the model parameters, the  LM algorithm iteratively refines the solution. At each iteration, the LM algorithm calculates the step $\bm{\xi}$ by solving the following linear system
\begin{equation} \label{equ_lm_update}
	\left( \mathbf{J}^{T} \mathbf{J}+\lambda\mathbf{I}\right) \bm{\xi} =  -\mathbf{J}^{T} \bm{\delta},
\end{equation}
where $\mathbf{I}$ denotes an $m \times m$ identity matrix and  $\lambda$ is a scalar that is adjusted at each iteration to ensure that $\bm{\xi}$ leads to a reduced cost.   After we solve the linear system (\ref{equ_lm_update}), $\mathbf{x}$ is updated by $\mathbf{x} \leftarrow \mathbf{x} + \bm{\xi}$. 

As mentioned above, the BA for visual reconstruction is a nonlinear least-squares problem. It  refines  camera poses $\mathbf{x}_c$ and 3D point coordinates $\mathbf{x}_p$  to minimize the sum of squared re-projection errors. Typically, the model parameters are  organized as $\mathbf{x} = \left[ \mathbf{x}_c; \mathbf{x}_p\right] $. Thus, the Jacobian matrix can be divided as $\mathbf{J} = \left[ \mathbf{J}_c, \mathbf{J}_p\right] $.   According to this structure, the equation system (\ref{equ_lm_update}) can be  rewritten as
\begin{equation} \label{equ_lm_block}
	\begin{bmatrix}
		\mathbf{A} & \mathbf{W}  \\ 
		\mathbf{W}^{T} & \mathbf{B} 
	\end{bmatrix}
	\begin{bmatrix}
		\bm{\xi}_c \\
		\bm{\xi}_p
	\end{bmatrix} =
	-\begin{bmatrix}
		\mathbf{J}_c^T \bm{\delta} \\
		\mathbf{J}_p^T \bm{\delta}
	\end{bmatrix},
\end{equation}
where $\mathbf{A} = \mathbf{J}_c^T\mathbf{J}_c + \lambda_c\mathbf{I}$, $\mathbf{B} = \mathbf{J}_p^T\mathbf{J}_p +  \lambda_p\mathbf{I}$ and  $\mathbf{W} = \mathbf{J}_c^T\mathbf{J}_p $. $\mathbf{A}$ and $\mathbf{B}$ are usually  block diagonal matrices. The Schur complement trick is generally adopted to solve this sparse linear system \cite{triggs1999bundle}.

The planar BA problem yields a similar structure as (\ref{equ_lm_block}).   But we cannot directly adopt the above method. This is because one plane can generate  many observations in one depth sensor recording, 
which can make the size of the resulting nonlinear least-squares problem prohibitively large. It would be time-consuming  to just  compute the Jacobian matrix $\mathbf{J}$ and the residual $\bm{\delta}$, not to mention to construct and  solve the linear system (\ref{equ_lm_block}). 
This paper shows that $\mathbf{J}$  of  PBA has a special structure, which can be used to significantly reduce the computational cost.

\section{Planar Bundle Adjustment}
In this section, we elaborate our solution for the PBA problem. We begin with the formulation of   the PBA problem. We then present our solution for this problem.



\subsection{Problem Formulation}

\begin{figure}[tb]
	\centering 
	\includegraphics[width=\columnwidth]{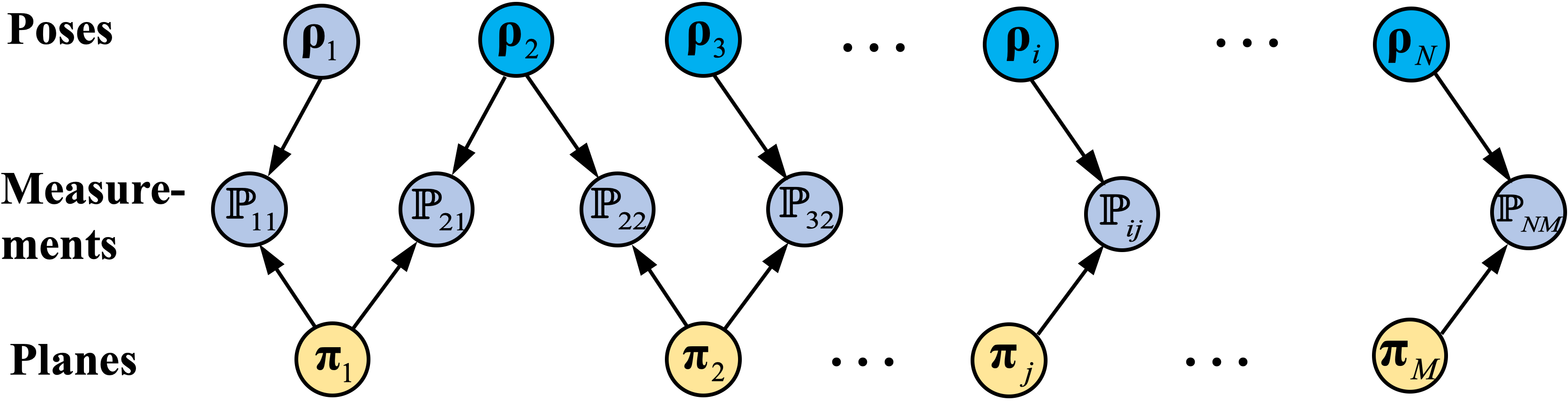}
	\caption{Bayes Net of PBA with $N$ poses and $M$ planes. The first pose $\bm{\rho}_{1}$ is fixed during the optimization. $\mathbb{P}_{ij}$ is the set of measurements of the $j$th plane $\bm{\pi}_{j}$ recorded at the $i$th pose $\bm{\rho}_{i}$. PBA is the problem of jointly refining $\bm{\rho}_{i}$ ($i \neq 1$) and $\bm{\pi}_{j}$.} 
	\label{fig_pose_graph}
\end{figure}

\begin{figure}[tb]
	\centering 
	\includegraphics[width=0.45\columnwidth]{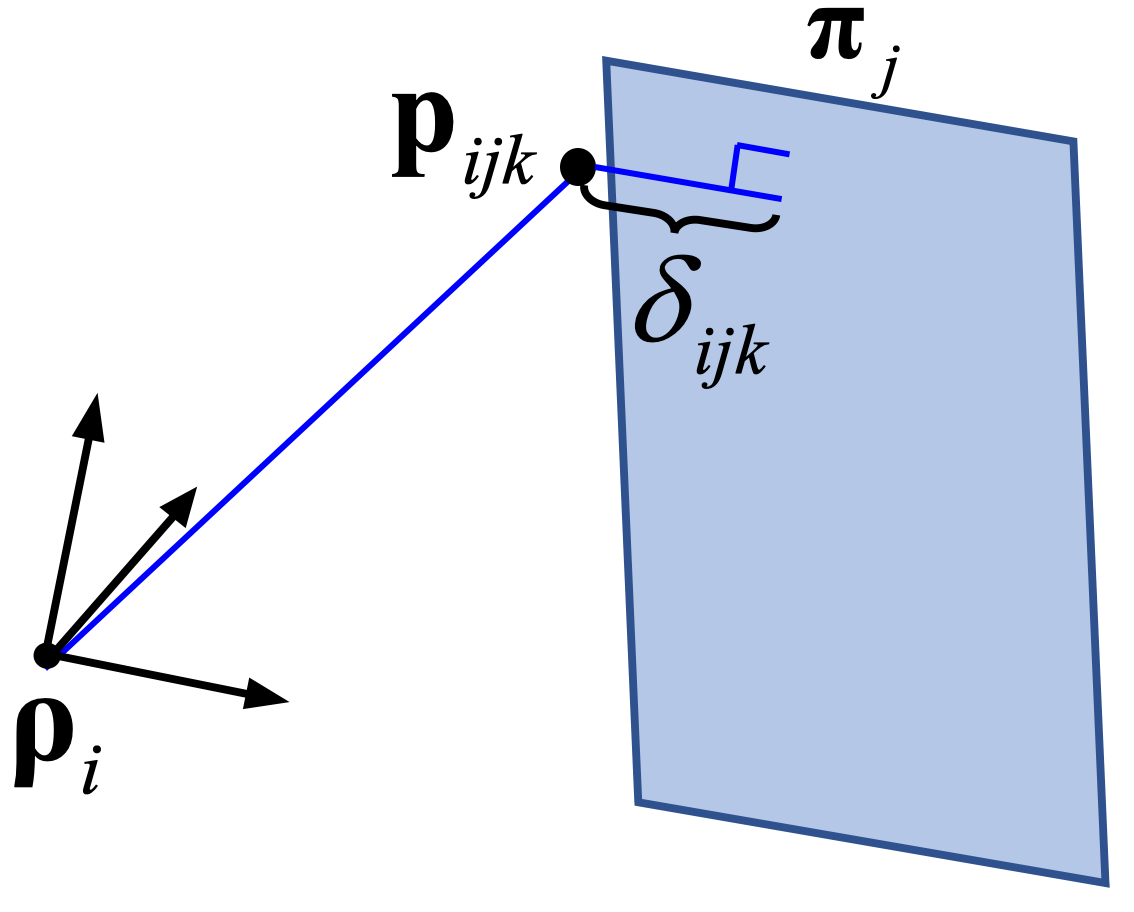}
	\caption{The geometric entities involved in PBA. 
		$\mathbf{p}_{ijk}$ is the $k$th point of the point set $\mathbb{P}_{ij}$ that  are captured from the $j$th plane $\bm{\pi}_{j}$   at the $i$th pose $\bm{\rho}_{i}$. $\delta_{ijk}$ is the signed distance from  $\mathbf{p}_{ijk}$ to   $\bm{\pi}_{j}$  defined in  (\ref{equ_res_pi}).} 
	\label{fig_delta}
\end{figure}

\autoref{fig_pose_graph} presents the Bayes Net of a PBA problem. We assume that there are $M$ planes and  $N$ sensor poses. Denote the rotation and translation of the $i$th pose as $\mathbf{R}_i \in SO_3$ and $\mathbf{t}_i \in \mathbb{R}^3$, respectively.  Suppose the $j$th plane has the parameters  $\bm{\pi}_j = \left[ \mathbf{n}_{j}; d_j\right]$. 
The measurements of the $j$th plane at the $i$th pose are a set of $K_{ij}$ points defined as:
\begin{equation}
	\mathbb{P}_{ij} = \left\lbrace \mathbf{p}_{ijk}\right\rbrace _{k=1}^{K_{ij}}
\end{equation}
Each $\mathbf{p}_{ijk} \in \mathbb{P}_{ij}$ provides one constraint on the $i$th pose and $j$th plane, which is demonstrated in \autoref{fig_delta}.  The residual   $\delta_{ijk}$ is the signed distance from $\mathbf{p}_{ijk}$ to plane $\bm{\pi}_j$ which can be  written as
\begin{equation} \label{equ_res_pi}
	\delta_{ijk} = \mathbf{n}_{j} \cdot \left( \mathbf{R}_i\mathbf{p}_{ijk} + \mathbf{t}_i\right)  + d_j,
\end{equation}

Unlike $\mathbf{t}_i$, the rotation $\mathbf{R}_i$ and plane parameters $\bm{\pi}_j $ have extra constraints. The parameterization of both entities has been well-studied. For instance, $\mathbf{R}_i$ can be parameterized by a quaternion, angle-axis or Euler angles. The plane parameters $\bm{\pi}_j$ can be represented by homogeneous coordinates \cite{hartley2003multiple},  closest point parameterization \cite{yang2019tightly} or the minimal parameterization based on quaternion introduced in \cite{kaess2015simultaneous}. Our algorithm does not depend on  a special  parameterization. We define $\bm{\theta}_i \mapsto \mathbf{R}\left( \bm{\theta}_i\right) $ and $\bm{\omega}_j \mapsto \bm{\pi}\left( \bm{\omega}_j\right)$ to represent arbitrary  parameterization for rotation and plane parameters. $\bm{\theta}_i$ and  $\mathbf{t}_i$ are related to the sensor pose. According to the convention of   visual BA introduced above, we  combine them as $\bm{\rho}_i= \left[\bm{\theta}_i; \mathbf{t}_i \right]$.    Generally,  $\bm{\rho}_i$ has $6$ or $7$ unknowns,  depending on the parameterization of the rotation matrix (6 for  the cases of any minimal representations of  the rotation and 7 for the quaternion). $\bm{\omega}_{j}$  generally has 3 or  4 unknowns (3 for minimal representations of a  plane \cite{kaess2015simultaneous,yang2019tightly} and 4 for the  homogeneous coordinates of a plane\cite{hartley2003multiple})

Using the above notations, $\delta_{ijk}$ is a function of $\bm{\rho}_i$ and $ \bm{\omega}_j$. The PBA is the problem of jointly refining all $\bm{\rho}_i$ ($i \neq 1$) and $\bm{\omega}_j$ by minimizing the following nonlinear least-squares problem
\begin{equation} \label{equ_cost}
	\min_{\substack{\bm{\rho}_i, \bm{\omega}_ j  \\ i \neq 1}} \ \sum_{i}\sum_{j} \sum_{k}\delta_{ijk} ^2\left( \bm{\rho}_i,\bm{\omega}_j\right).
\end{equation}   
Here the first pose $\bm{\rho}_1$ is fixed  during the optimization to anchor the coordinate system rigidly.
\subsection{Structure of the  Jacobian Matrix }
One crucial step of the LM algorithm is to calculate the Jacobian matrix. The Jacobian matrix of planar BA has a special structure. The observation of the $j$th plane at the $i$th pose is a point set $\mathbb{P}_{ij}$.   Let us consider the Jacobian matrix $\mathbf{J}_{ij}$ derived from   $\mathbb{P}_{ij}$. The whole Jacobian matrix $\mathbf{J}$ is the stack of all $\mathbf{J}_{ij}$. Assume there are  $K_{ij}$ points in $\mathbb{P}_{ij}$.  $\mathbf{J}_{ij}$ ($ i \neq 1$) has the following form 

\begin{equation} \label{equ_Jij}
\mathbf{J}_{ij} =\underbrace{ \left[ \begin{matrix}
	\mathbf{0} & \cdots & \overmat{\textit{i}\textbf{th pose}}{\frac{\partial \delta_{ij1} }{\partial {\bm{\rho}}_{i}}} & \cdots  & \mathbf{0} \\
	\mathbf{0} & \cdots & 	\frac{\partial \delta_{ij2} }{\partial {\bm{\rho}}_{i}} &\cdots  & \mathbf{0} \\
	\vdots & \ddots & 	\vdots & \ddots & \vdots \\
	\mathbf{0}& \cdots & 	\frac{\partial \delta_{ij{K_{ij}}} }{\partial {\bm{\rho}}_{i}} & \cdots &  \mathbf{0} \\
\end{matrix} \right. }_{N-1 \ \text{\textbf{pose}}} 
\underbrace{
	\left.
	\begin{matrix}
		\quad \cdots &  \overmat{\textit{j}\textbf{th plane}}{\frac{\partial \delta_{ij1} }{\partial {\bm{\omega}}_{j}}}  & \cdots & \mathbf{0} \\
		\quad \cdots & \frac{\partial \delta_{ij2} }{\partial {\bm{\omega}}_{j}} & \dots & \mathbf{0} \\
		\quad \ddots& \vdots & \ddots &\vdots \\
		\quad \cdots & \frac{\partial \delta_{ij{K_{ij}}} }{\partial {\bm{\omega}}_{j}} & \cdots & \mathbf{0} \\
	\end{matrix}
	\right] }_{M \ \text{\textbf{plane}}}
\end{equation}

To calculate 	$\frac{\partial \delta_{ijk} }{\partial {\bm{\rho}}_{i}}$ and $\frac{\partial \delta_{ijk} }{\partial {\bm{\omega}}_{j}}$, we begin by considering the form of $\delta_{ijk}$ in  (\ref{equ_res_pi}).  Let us define
\begin{equation} \label{equ_define}
	\mathbf{R}_{i} = \begin{bmatrix}
	R_i^{1,1} & R_i^{1,2} & R_i^{1,3} \\ 
	R_i^{2,1} & R_i^{2,2} & R_i^{2,3} \\ 
	R_i^{3,1} & R_i^{3,2} & R_i^{3,3} \\ 
	\end{bmatrix},
	\mathbf{t}_{i} = \begin{bmatrix}
		t_{i}^{1} \\
		t_{i}^{2} \\
		t_{i}^{3} 
	\end{bmatrix},
	\mathbf{n}_{j} = \begin{bmatrix}
		n_{j}^{1} \\
		n_{j}^{2} \\
		n_{j}^{3} 
	\end{bmatrix},
	\mathbf{p}_{ijk} = \begin{bmatrix}
		x_{ijk} \\
		y_{ijk} \\
		z_{ijk} 
	\end{bmatrix}.
\end{equation}
Note that the elements of $\mathbf{R}_i$  defined above are functions of $\bm{\theta}_i$, and also that $d_j$ and the elements of $\mathbf{n}_j$ are functions of $\bm{\omega}_j$. Substituting (\ref{equ_define}) into (\ref{equ_res_pi}) and expanding it, we have
\begin{equation}
	\begin{split} \label{equ_res_exp}
		\delta_{ijk} =  & x_{ijk}R_{i}^{1,1}n_j^1 +  y_{ijk}R_{i}^{1,2}n_j^1 + z_{ijk}R_{i}^{1,3}n_j^1   + \\
							   & x_{ijk}R_{i}^{2,1}n_j^2 + y_{ijk}R_{i}^{2,2}n_j^2 + z_{ijk}R_{i}^{2,3}n_j^2  + \\
							   & x_{ijk}R_{i}^{3,1}n_j^3 + y_{ijk}R_{i}^{3,2}n_j^3 + z_{ijk}R_{i}^{3,3}n_j^3   + \\
							   & n_{j}^{1}t_{i}^{1} + n_{j}^{2}t_{i}^{2} + n_{j}^{3}t_{i}^{3} + d_j.
	\end{split}
\end{equation}
We can rewrite (\ref{equ_res_exp}) as
\begin{equation} \label{equ_delta_exp}
\delta_{ijk}   =  \mathbf{c}_{ijk} \cdot \bm{\nu}_{ij}, \\
\end{equation}
where $\mathbf{c}_{ijk}$ and $\bm{\nu}_{ij}$ are 13-dimensional vectors as
\begin{equation} \label{equ_delta_definition}
	\begin{split}
		\mathbf{c}_{ijk}  =  & \begin{bmatrix}
			x_{ijk}, \, y_{ijk} ,\, z_{ijk}, \, x_{ijk}, \, y_{ijk} ,\, z_{ijk}, \, x_{ijk}, \, y_{ijk} ,\, z_{ijk}, 1, 1, 1, 1
		\end{bmatrix}^{T}, \\
		\bm{\nu}_{ij}  = &
			[R_{i}^{1,1}n_j^1, \,  R_{i}^{1,2}n_j^1, \, R_{i}^{1,3}n_j^1, \, R_{i}^{2,1}n_j^2, \,  R_{i}^{2,2}n_j^2, \, R_{i}^{2,3}n_j^2, \, R_{i}^{3,1}n_j^3,  \\
			&  R_{i}^{3,2}n_j^3, \, R_{i}^{3,3}n_j^3, \, n_{j}^{1}t_{i}^{1}, \, n_{j}^{2}t_{i}^{2}, \, n_{j}^{3}t_{i}^{3}, \, d_j ]^{T}.
	\end{split}
\end{equation}
The elements in $\mathbf{c}_{ijk}$ are from the observation $\mathbf{p}_{ijk}$ or 1. They are constants. On the other hand,  the elements in $\bm{\nu}_{ij}$ are functions of $\bm{\rho}_i$ and $\bm{\omega}_j$. They are related to the unknowns which we want to estimate. 

Let us calculate the partial derivative of $\delta_{ijk}$. Assume that $\bm{\rho}_{i}$ has  $n_{\bm{\rho}}$ unknowns  and $\bm{\omega}_{j}$ has $n_{\bm{\omega}}$ unknowns. 
We define 
\begin{equation}
	\bm{\zeta}_{ij}  = \begin{bmatrix}
		\bm{\rho}_{i} \\
		\bm{\omega}_{j}
	\end{bmatrix}
	\setlength\arraycolsep{1pt}
	\begin{matrix}
		\left. \right\rbrace & n_{\bm{\rho}}  & \text{unknowns}\\
		\left. \right\rbrace & n_{\bm{\omega}} & \text{unknowns}
	\end{matrix}
\end{equation}
Suppose $\zeta_{ij}^d$ is the $d$th element of $\bm{\zeta}_{ij}$. 
According to (\ref{equ_delta_exp}), the partial derivative of $\delta_{ijk}$ with respect to  $\zeta_{ij}^{d}$ has the following form
\begin{equation} \label{equ_partial_der}
\frac{\partial \delta_{ijk} }{\partial \zeta_{ij}^{d}} =  \frac{\partial \mathbf{c}_{ijk} \cdot \bm{\nu}_{ij}}{\partial \zeta_{ij}^{d}}= \mathbf{c}_{ijk} \cdot \frac{\partial \bm{\nu}_{ij}}{\partial \zeta_{ij}^{d}},
\end{equation}
where $ \frac{\partial \bm{\nu}_{ij}}{\partial \zeta_{ij}^{d}}$ is a 13-dimensional vector whose elements are the partial derivatives  of  the elements of $\bm{\nu}_{ij}$ with respect to  $\zeta_{ij}^d$. 

Then we consider $\frac{\partial \delta_{ijk} }{\partial \bm{\rho}_{i}}$  and $\frac{\partial \delta_{ijk} }{\partial \bm{\omega}_j}$. 
According to (\ref{equ_partial_der}), $\frac{\partial \delta_{ijk} }{\partial \bm{\rho}_{i}}$ has the following form
\begin{equation} \label{equ_d_rho}
	\begin{split}
		\frac{\partial \delta_{ijk} }{\partial {\bm{\rho}}_{i}} & = \left[ 
		\frac{\partial \delta_{ijk} }{\partial \zeta_{ij}^{1}}, \, \dots, \, \frac{\partial \delta_{ijk} }{\partial \zeta_{ij}^{n_{\bm{\rho}}}}
		\right] \\
		& = \left[ \mathbf{c}_{ijk} \cdot \frac{\partial \bm{\nu}_{ij}}{\partial \zeta_{ij}^{1}}, \, \dots, \,   \mathbf{c}_{ijk} \cdot \frac{\partial \bm{\nu}_{ij}}{\partial \zeta_{ij}^{n_{\bm{\rho}}}} 
		\right] \\
		& = \mathbf{c}_{ijk}^{T}\underbrace{\left[  \frac{\partial \bm{\nu}_{ij}}{\partial \zeta_{ij}^{1}}, \, \dots, \,  \frac{\partial \bm{\nu}_{ij}}{\partial \zeta_{ij}^{n_{\bm{\rho}}}} 
		\right]}_{\bm{V}^{{\bm{\rho}}_{i}}_{ij}}
		 = \mathbf{c}_{ijk}^{T}\bm{V}^{{\bm{\rho}}_{i}}_{ij}.
	\end{split}
\end{equation}
$\bm{V}^{{\bm{\rho}}_{i}}_{ij}$ is generally a $13 \times 6$ or  $13 \times 7$ matrix ($13 \times 6$ for  minimal representations of the rotation matrix and $13 \times 7$  for quaternion). 

Similarly, we can calculate $\frac{\partial \delta_{ijk} }{\partial \bm{\omega}_{j}} $  as 
\begin{equation} \label{equ_d_omega}
	\begin{split}
		\frac{\partial \delta_{ijk} }{\partial \bm{\omega}_{j}} 
		& = \mathbf{c}_{ijk}^{T}\underbrace{\left[  \frac{\partial \bm{\nu}_{ij}}{\partial \zeta_{ij}^{n_{\bm{\rho}}+1}}, \, \dots, \,  \frac{\partial \bm{\nu}_{ij}}{\partial \zeta_{ij}^{n_{\bm{\rho}}+n_{\bm{\omega}}}} 
			\right]}_{\bm{V}^{{\bm{\omega}}_{i}}_{ij}}
		=  \mathbf{c}_{ijk}^{T}\bm{V}^{{\bm{\omega}}_{j}}_{ij}
	\end{split}
\end{equation}
Typically, $\bm{V}^{{\bm{\omega}}_{j}}_{ij} $ is a $13 \times 3$ or  $13 \times 4$ matrix ($13 \times 3$ for minimal representations of a plane \cite{kaess2015simultaneous,yang2019tightly} and $13 \times 4$ for homogeneous coordinates  of a plane\cite{hartley2003multiple}).

Now we consider the form of $\mathbf{J}_{ij}$. Let us define
\begin{equation} \label{equ_C_ij}
\mathbf{C}_{ij} =  \begin{bmatrix}
\mathbf{c}_{ij1}^{T} \\
\mathbf{c}_{ij2}^{T} \\
\vdots \\
\mathbf{c}_{ijK_{ij}}^{T}
\end{bmatrix},
\end{equation}
where  the $k$th row $\mathbf{c}_{ijk}^{T} $ is defined in (\ref{equ_delta_definition}).  $\mathbf{C}_{ij}$ is a matrix of size $K_{ij} \times 13$. Substituting (\ref{equ_d_rho}) and (\ref{equ_d_omega}) into (\ref{equ_Jij}) and using the definition of $\mathbf{C}_{ij}$ in (\ref{equ_C_ij}), we have
\begin{equation} \label{equ_Jij_block}
	\begin{split}
		\mathbf{J}_{ij} = & \begin{bmatrix}
			\mathbf{0} & \cdots & \mathbf{c}_{ij1}^{T}\mathbf{V}^{\bm{\rho}_{i}}_{ij} & \cdots  & \mathbf{0} & \cdots &   \mathbf{c}_{ij1}^{T}\mathbf{V}^{\bm{\omega}_{j}}_{ij}& \cdots & \mathbf{0} \\
			\mathbf{0} & \cdots & \mathbf{c}_{ij2}^{T}\mathbf{V}^{\bm{\rho}_{i}}_{ij} & \cdots  & \mathbf{0} & \cdots &   \mathbf{c}_{ij2}^{T}\mathbf{V}^{\bm{\omega}_{j}}_{ij} & \cdots & \mathbf{0} \\
			\vdots & \ddots & 	\vdots & \ddots & \vdots & \ddots& \vdots & \ddots &\vdots\\
			\mathbf{0} & \cdots &  
			\smash{\underbrace{\mathbf{c}_{ijK_{ij}}^{T}\mathbf{V}^{\bm{\rho}_{i}}_{ij} }_{\mathbf{C}_{ij}\mathbf{V}^{\bm{\rho}_{i}}_{ij} } }& \cdots   & \mathbf{0} & 
			\cdots & \smash{\underbrace{\mathbf{c}_{ijK_{ij}}^{T}\mathbf{V}^{\bm{\omega}_{j}}_{ij} }_{\mathbf{C}_{ij}\mathbf{V}^{\bm{\omega}_{j}}_{ij} }}& \cdots & \mathbf{0} \\
		\end{bmatrix}\\	
		\\
		\\
		=&\begin{bmatrix}
			\mathbf{0} & \cdots & \mathbf{C}_{ij}\bm{V}^{{\bm{\rho}}_{i}}_{ij} &  \cdots  & \mathbf{0} & \cdots & \mathbf{C}_{ij}\bm{V}^{{\bm{\omega}}_{j}}_{ij} & \cdots  & \mathbf{0}
		\end{bmatrix}.
	\end{split}
\end{equation}

\textbf{Jacobian Matrix  for the First Pose} \quad
Suppose $\mathbb{P}_{1j}$ is the set of measurements from the $j$th plane $\bm{\omega}_{j}$ at the first pose  $\bm{\rho}_1$. As we fix $\bm{\rho}_1$  during the optimization, the Jacobian matrix $\mathbf{J}_{1j} $ derived from $\mathbb{P}_{1j}$  has a special form as
\begin{equation} \label{equ_J1j}
\mathbf{J}_{1j} =\begin{bmatrix}
\mathbf{0} & \cdots & \mathbf{0} &  \cdots  & \mathbf{0} & \cdots & \mathbf{C}_{1j}\bm{V}^{{\bm{\omega}}_{j}}_{1j} & \cdots  & \mathbf{0}
\end{bmatrix}
\end{equation}
\subsection{Factorization of $\mathbf{C}_{ij}$  }
 $\mathbf{C}_{ij}$ has a special structure. It has duplicated columns, according to the structure of $\mathbf{c}_{ijk}$ defined in (\ref{equ_delta_definition}).  We have the following lemma about $\mathbf{C}_{ij}$ :

\begin{lemma} \label{lemma_C_ij}
	$\mathbf{C}_{ij}$ can be written in the following form 
	\begin{equation} \label{equ_qr}
	\mathbf{C}_{ij} = \mathbf{Q}_{ij}\mathbf{M}_{ij},
	\end{equation}
	where  $\mathbf{M}_{ij}$  has the size $4 \times 13$ and $\mathbf{Q}_{ij}^{T}\mathbf{Q}_{ij} = \mathbf{I}_{4}$, where $\mathbf{I}_{4}$ is the $4 \times 4 $ identity matrix.
\end{lemma}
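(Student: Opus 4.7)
The plan hinges on the very explicit repetition pattern inside $\mathbf{c}_{ijk}$ given by (\ref{equ_delta_definition}): entries $1,4,7$ are all $x_{ijk}$, entries $2,5,8$ are all $y_{ijk}$, entries $3,6,9$ are all $z_{ijk}$, and entries $10,11,12,13$ are all $1$. Stacking these rows into $\mathbf{C}_{ij}$, the thirteen columns of $\mathbf{C}_{ij}$ are therefore permutations and duplications of only four underlying column vectors, so $\operatorname{rank}(\mathbf{C}_{ij}) \le 4$. The lemma is then just a rewriting of this fact as a product with a matrix having orthonormal columns.

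Concretely, I would first introduce the $K_{ij}\times 4$ matrix
$$
\mathbf{D}_{ij} = \begin{bmatrix} x_{ij1} & y_{ij1} & z_{ij1} & 1 \\ x_{ij2} & y_{ij2} & z_{ij2} & 1 \\ \vdots & \vdots & \vdots & \vdots \\ x_{ijK_{ij}} & y_{ijK_{ij}} & z_{ijK_{ij}} & 1 \end{bmatrix}
$$
together with a fixed $4\times 13$ selector $\mathbf{S}$ whose $c$th column is the standard basis vector $\mathbf{e}_r$ for the pattern ($r\in\{1,2,3,4\}$) that fills column $c$ of $\mathbf{C}_{ij}$. Reading off (\ref{equ_delta_definition}) row by row one verifies directly that $\mathbf{C}_{ij} = \mathbf{D}_{ij}\mathbf{S}$.

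The second step is to orthonormalize the columns of $\mathbf{D}_{ij}$. Apply a thin QR factorization $\mathbf{D}_{ij} = \mathbf{Q}_{ij}\mathbf{R}_{ij}$ with $\mathbf{Q}_{ij}\in\mathbb{R}^{K_{ij}\times 4}$ satisfying $\mathbf{Q}_{ij}^{T}\mathbf{Q}_{ij} = \mathbf{I}_{4}$ and $\mathbf{R}_{ij}\in\mathbb{R}^{4\times 4}$ upper triangular. Defining $\mathbf{M}_{ij} \defeq \mathbf{R}_{ij}\mathbf{S}$, which has size $4\times 13$, yields
$$
\mathbf{C}_{ij} \;=\; \mathbf{D}_{ij}\mathbf{S} \;=\; \mathbf{Q}_{ij}\mathbf{R}_{ij}\mathbf{S} \;=\; \mathbf{Q}_{ij}\mathbf{M}_{ij},
$$
which is exactly the claimed form (\ref{equ_qr}).

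The only subtle point, and the place where I would have to add a sentence of justification, is the rank assumption needed for the thin QR to produce a $\mathbf{Q}_{ij}$ with precisely four orthonormal columns: the columns of $\mathbf{D}_{ij}$ must be linearly independent. This holds generically because the points $\mathbf{p}_{ijk}$ lie on the plane $\bm{\pi}_j$ as seen from the $i$th sensor, and that plane does not pass through the sensor origin (otherwise the sensor would have to see through itself), so the all-ones column is independent of the three coordinate columns as long as the observed points are not collinear. In the degenerate case one simply takes an orthonormal basis of $\operatorname{range}(\mathbf{D}_{ij})$ and lets $\mathbf{M}_{ij}$ have the corresponding (smaller) number of rows; the argument and its downstream use in the reduced Jacobian construction are unaffected.
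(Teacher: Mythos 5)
Your proposal is correct and follows essentially the same route as the paper: extract the four distinct columns ($x$, $y$, $z$, $\mathbf{1}$) into a $K_{ij}\times 4$ matrix (your $\mathbf{D}_{ij}$, the paper's $\mathbf{E}_{ij}$), take its thin QR factorization, and duplicate the columns of the triangular factor to form $\mathbf{M}_{ij}$ — your selector matrix $\mathbf{S}$ is just a tidier notation for that column-copying step, and $\mathbf{M}_{ij}=\mathbf{R}_{ij}\mathbf{S}$ coincides with the paper's $\mathbf{M}_{ij}$. Your rank caveat is harmless but not actually needed: a Householder-based thin QR yields $\mathbf{Q}_{ij}$ with exactly four orthonormal columns whenever $K_{ij}\ge 4$, even if $\mathbf{D}_{ij}$ is rank deficient.
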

\begin{proof}
	As shown in the definition of $\mathbf{c}_{ijk}$ in (\ref{equ_delta_definition}), $x_{ijk}$, $y_{ijk}$, $z_{ijk}$  and 1 are duplicated several times  to form $\mathbf{c}_{ijk}$. Therefore, there are only 4 unique columns  among the 13 columns of $\mathbf{C}_{ij}$, which contains the constant 1 and the  $x$, $y$, $z$ coordinates of points within $\mathbb{P}_{ij}$. We denote them as
	\begin{equation}  \label{equ_qr_E_ij}	
	\begin{split}\mathbf{E}_{ij} = &
		\begin{bmatrix}
			x_{ij1} & y_{ij1} & z_{ij1} & 1 \\
			x_{ij2} & y_{ij2} & z_{ij2} & 1 \\
			\vdots  & \vdots  & \vdots  & \vdots \\
			\undermat{\mathbf{x}_{ij}}{x_{ijK_{ij}}} & \undermat{\mathbf{y}_{ij}}{y_{ijK_{ij}}} & \undermat{\mathbf{z}_{ij}}{z_{ijK_{ij}} }& \undermat{\mathbf{1}}{1}
		\end{bmatrix} 
		= \begin{bmatrix}
			\mathbf{x}_{ij} & \mathbf{y}_{ij} & \mathbf{z}_{ij} & \mathbf{1}
		\end{bmatrix}.
	\end{split} 
	\end{equation}
	 The 13 columns in $\mathbf{C}_{ij}$ are  simply copies of the 4 columns in $\mathbf{E}_{ij}$. 
	  Let us define the   thin QR decomposition \cite{golub2012matrix}   of $\mathbf{E}_{ij}$  as  
	 \begin{equation} \label{equ_E_qr}
	 \mathbf{E}_{ij} =  \mathbf{Q}_{ij}\mathbf{U}_{ij}\\
	 \end{equation}
	 where  $\mathbf{Q}_{ij}^{T}\mathbf{Q}_{ij} = \mathbf{I}_{4}$ and $\mathbf{U}_{ij}$ is an upper triangular matrix. $\mathbf{Q}_{ij}$ is of  size $K_{ij} \times 4$  and $\mathbf{U}_{ij}$ is of  size $4 \times 4$. Here we use the  thin QR decomposition, since  the number of points $K_{ij}$ is generally much larger than 4.  The thin QR decomposition can reduce computational time. We partition $\mathbf{U}_{ij}$ into its columns as
	 \begin{equation} \label{equ_U_ij}
	 	\mathbf{U}_{ij}  = \begin{bmatrix}
	 		\mathbf{u}_{ij}^{1} & \mathbf{u}_{ij}^{2} & \mathbf{u}_{ij}^{3} & \mathbf{u}_{ij}^{4}  
	 	\end{bmatrix}.
	 \end{equation}
	 Substituting (\ref{equ_U_ij}) into (\ref{equ_qr_E_ij}), we have
	 \begin{equation} \label{equ_E=QU}
	 	\begin{split}
			\mathbf{E}_{ij} 
			& = \mathbf{Q}_{ij}\begin{bmatrix}
			\mathbf{u}_{ij}^{1} & \mathbf{u}_{ij}^{2} & \mathbf{u}_{ij}^{3} & \mathbf{u}_{ij}^{4}  
			\end{bmatrix} \\
	 	\end{split}
	 \end{equation}
	 Comparing (\ref{equ_qr_E_ij}) and (\ref{equ_E=QU}), we get
	 \begin{equation} \label{equ_xyz1}
	 	\begin{split}
	 		\mathbf{x}_{ij} &= \mathbf{Q}_{ij}\mathbf{u}_{ij}^{1}, \quad
	 		\mathbf{y}_{ij} = \mathbf{Q}_{ij}\mathbf{u}_{ij}^{2}, \\
	 		\mathbf{z}_{ij} &= \mathbf{Q}_{ij}\mathbf{u}_{ij}^{3}, \quad
	 		\mathbf{1} = \mathbf{Q}_{ij}\mathbf{u}_{ij}^{4}
	 	\end{split}
	 \end{equation}
	As  the columns of $\mathbf{C}_{ij}$ are copies of the columns of $\mathbf{E}_{ij}$, according to the form of $\mathbf{c}_{ijk}$ in (\ref{equ_delta_definition}) and the definition of $\mathbf{E}_{ij}$ in (\ref{equ_qr_E_ij}) , $\mathbf{C}_{ij}$ can be written as
	\begin{equation} \label{equ_C_ij_col}
			\mathbf{C}_{ij}  =\begin{bmatrix}
			\mathbf{x}_{ij} , \  \mathbf{y}_{ij}, \ \mathbf{z}_{ij}, \ \mathbf{x}_{ij}, \  \mathbf{y}_{ij}, \ \mathbf{z}_{ij}, \  \mathbf{x}_{ij}, \  \mathbf{y}_{ij}, \ \mathbf{z}_{ij}, \  \mathbf{1}, \     \mathbf{1}, \  \mathbf{1}, \  \mathbf{1}
			\end{bmatrix}
	\end{equation}
	Substituting (\ref{equ_xyz1}) into (\ref{equ_C_ij_col}), we finally have 
	\begin{equation} \label{equ_C_ij_QijMij}
		\begin{split}
			\mathbf{C}_{ij} &  =\mathbf{Q}_{ij}\underbrace{\begin{bmatrix}
				\mathbf{u}_{ij}^{1} , \  \mathbf{u}_{ij}^{2}, \ \mathbf{u}_{ij}^{3}, \ \mathbf{u}_{ij}^{1}, \  \mathbf{u}_{ij}^{2}, \ \mathbf{u}_{ij}^{3}, \  \mathbf{u}_{ij}^{1}, \  \mathbf{u}_{ij}^{2}, \ \mathbf{u}_{ij}^{3}, \  \mathbf{u}_{ij}^{4}, \  \mathbf{u}_{ij}^{4}, \  \mathbf{u}_{ij}^{4}, \  \mathbf{u}_{ij}^{4}
				\end{bmatrix}}_{\mathbf{M}_{ij}} \\
			&= \mathbf{Q}_{ij}\mathbf{M}_{ij}
		\end{split}
	\end{equation}
\end{proof}

The factorization of $\mathbf{C}_{ij}$ can be used to  significantly reduce the computational cost as described below.

\subsection{Reduced Jacobian Matrix }


According to Lemma \autoref{lemma_C_ij}, $\mathbf{C}_{ij}$ can be factorized as $\mathbf{C}_{ij} = \mathbf{Q}_{ij}\mathbf{M}_{ij}$.   We  define the \textbf{reduced  Jacobian matrix} $\mathbf{J}_{ij}^{r}$  of $\mathbf{J}_{ij}$ as 
\begin{equation} \label{equ_Jr_ij}
	\mathbf{J}_{ij}^{r} =\begin{cases} 
		\begin{bmatrix}
			\mathbf{0} & \cdots & \mathbf{0} &  \cdots  & \mathbf{0} & \cdots & \mathbf{M}_{1j}\bm{V}^{{\bm{\omega}}_{j}}_{1j} & \cdots  & \mathbf{0}
		\end{bmatrix}& i = 1\\
		\\
		\begin{bmatrix}
		\mathbf{0} & \cdots & \mathbf{M}_{ij}\bm{V}^{{\bm{\rho}}_{i}}_{ij} &  \cdots  & \mathbf{0} & \cdots & \mathbf{M}_{ij}\bm{V}^{{\bm{\omega}}_{j}}_{ij} & \cdots  & \mathbf{0} 
		\end{bmatrix}  & i \neq 1
	\end{cases}
\end{equation}
We call it a reduced  Jacobian matrix, because $\mathbf{M}_{ij}$ is a much smaller matrix than $\mathbf{C}_{ij}$. $\mathbf{C}_{ij}$ has   size $K_{ij} \times 13$. According to Lemma \autoref{lemma_C_ij}, $\mathbf{M}_{ij}$ has  size $4 \times 13$. Generally, $K_{ij}$ is much larger than 4.

We stack $\mathbf{J}_{ij}$ and $\mathbf{J}_{ij}^{r}$ to form the Jacobian matrix $\mathbf{J}$ and the reduced Jacobian matrix $\mathbf{J}^{r}$ for the cost function (\ref{equ_cost}), as
\begin{equation} \label{equ_J_Jr}
	\mathbf{J} = \begin{bmatrix}
		\vdots \\
		\mathbf{J}_{ij} \\
		\vdots
	\end{bmatrix}, \quad
	\mathbf{J}^{r} = \begin{bmatrix}
		\vdots \\
		\mathbf{J}_{ij}^{r} \\
		\vdots
	\end{bmatrix},
\end{equation}

The following lemma shows that  $	\mathbf{J}^{r} $ can replace 	$\mathbf{J} $ to calculate $\mathbf{J}^{T}\mathbf{J}$  in the  LM algorithm.
\begin{lemma} \label{lemma_JJ}
	For the planar BA, we have $\mathbf{J}^{T}\mathbf{J} = {\mathbf{J}^{r}}^{T}\mathbf{J}^{r}$.
\end{lemma}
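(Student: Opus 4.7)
The plan is to reduce the global identity to a block-by-block identity and then use the orthonormality of $\mathbf{Q}_{ij}$ from Lemma 1.

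First I would observe that since $\mathbf{J}$ and $\mathbf{J}^r$ are both formed by vertically stacking the per-observation-set blocks (as in \eqref{equ_J_Jr}), we have
\begin{equation*}
\mathbf{J}^T\mathbf{J} = \sum_{i,j} \mathbf{J}_{ij}^T \mathbf{J}_{ij}, \qquad (\mathbf{J}^r)^T \mathbf{J}^r = \sum_{i,j} (\mathbf{J}_{ij}^r)^T \mathbf{J}_{ij}^r.
\end{equation*}
It therefore suffices to prove the per-block identity $\mathbf{J}_{ij}^T\mathbf{J}_{ij} = (\mathbf{J}_{ij}^r)^T \mathbf{J}_{ij}^r$ for every $i,j$, and then sum.

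Next I would exploit the extreme sparsity of $\mathbf{J}_{ij}$ shown in \eqref{equ_Jij_block}: it contains only two non-zero block columns, $\mathbf{C}_{ij}\mathbf{V}^{\bm{\rho}_i}_{ij}$ (in the slot of the $i$th pose) and $\mathbf{C}_{ij}\mathbf{V}^{\bm{\omega}_j}_{ij}$ (in the slot of the $j$th plane), so $\mathbf{J}_{ij}^T\mathbf{J}_{ij}$ has only four non-zero blocks, each of the form
\begin{equation*}
(\mathbf{V}^{\bullet}_{ij})^{T}\,\mathbf{C}_{ij}^{T}\mathbf{C}_{ij}\,\mathbf{V}^{\bullet}_{ij},
\end{equation*}
where $\bullet$ is either $\bm{\rho}_i$ or $\bm{\omega}_j$. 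The block structure of $(\mathbf{J}_{ij}^r)^T\mathbf{J}_{ij}^r$ is identical, except that each occurrence of $\mathbf{C}_{ij}$ is replaced by $\mathbf{M}_{ij}$. The case $i = 1$ is handled analogously using \eqref{equ_J1j} and \eqref{equ_Jr_ij}, where only the plane block survives.

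The crux is then the scalar Gram identity $\mathbf{C}_{ij}^{T}\mathbf{C}_{ij} = \mathbf{M}_{ij}^{T}\mathbf{M}_{ij}$. This follows directly from Lemma 1: since $\mathbf{C}_{ij} = \mathbf{Q}_{ij}\mathbf{M}_{ij}$ with $\mathbf{Q}_{ij}^T \mathbf{Q}_{ij} = \mathbf{I}_4$, we get $\mathbf{C}_{ij}^T \mathbf{C}_{ij} = \mathbf{M}_{ij}^T \mathbf{Q}_{ij}^T \mathbf{Q}_{ij} \mathbf{M}_{ij} = \mathbf{M}_{ij}^T \mathbf{M}_{ij}$. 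Substituting this into each of the four non-zero blocks gives $\mathbf{J}_{ij}^T\mathbf{J}_{ij} = (\mathbf{J}_{ij}^r)^T \mathbf{J}_{ij}^r$, and summing over $i,j$ completes the proof.

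I do not expect a serious obstacle here; the argument is really just bookkeeping once the sparsity of the $\mathbf{J}_{ij}$ blocks is written out and Lemma 1 is applied. The only place where care is needed is to verify that the non-zero blocks in $\mathbf{J}_{ij}$ and $\mathbf{J}_{ij}^r$ sit in exactly the same rows/columns of the global Jacobian, so that the summand-by-summand comparison after forming $\mathbf{J}^T\mathbf{J}$ is legitimate.
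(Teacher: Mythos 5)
Your proposal is correct and follows essentially the same route as the paper's own proof: reduce to the per-block identity $\mathbf{J}_{ij}^{T}\mathbf{J}_{ij} = {\mathbf{J}_{ij}^{r}}^{T}\mathbf{J}_{ij}^{r}$ via the stacked structure, write out the (at most four) non-zero blocks of the form ${\bm{V}^{\bullet}_{ij}}^{T}\mathbf{C}_{ij}^{T}\mathbf{C}_{ij}\bm{V}^{\bullet}_{ij}$, handle $i=1$ separately, and invoke the factorization $\mathbf{C}_{ij}=\mathbf{Q}_{ij}\mathbf{M}_{ij}$ with $\mathbf{Q}_{ij}^{T}\mathbf{Q}_{ij}=\mathbf{I}_{4}$ to obtain $\mathbf{C}_{ij}^{T}\mathbf{C}_{ij}=\mathbf{M}_{ij}^{T}\mathbf{M}_{ij}$. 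No gaps; the argument matches the paper's Lemma~2 proof step for step.
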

\begin{proof}
	$\mathbf{J}$ and $\mathbf{J}^r$ are block vectors in terms of  $\mathbf{J}_{ij}$ and $\mathbf{J}_{ij}^r$ as defined in (\ref{equ_J_Jr}). According to block matrix multiplication, we have
	\begin{equation} \label{equ_sum_JTJ}
		\mathbf{J}^{T}\mathbf{J}  = \sum_{i,j}\mathbf{J}_{ij}^{T}\mathbf{J}_{ij}, \quad {\mathbf{J}^{r}}^{T}\mathbf{J}^{r}  = \sum_{i,j}{\mathbf{J}_{ij}^{r}}^{T}\mathbf{J}_{ij}^{r}
	\end{equation}
	For $i \neq 1$, using the expression in (\ref{equ_Jij_block}), $\mathbf{J}_{ij}^{T}\mathbf{J}_{ij}$ has the form
	\begin{equation} \label{equ_JTJ}
		\mathbf{J}_{ij}^{T}\mathbf{J}_{ij} = \begin{bmatrix}
			\mathbf{0} & \cdots  & 	\mathbf{0} & \cdots & \mathbf{0} & \cdots & \mathbf{0} \\
			\vdots        &  &  \vdots &  & \vdots &  & \vdots \\
			\mathbf{0} & \cdots  & 	 {\bm{V}^{{\bm{\rho}}_{i}}_{ij}}^{T}\mathbf{C}_{ij}^{T}\mathbf{C}_{ij}\bm{V}^{{\bm{\rho}}_{i}}_{ij} & \cdots &  {\bm{V}^{{\bm{\rho}}_{i}}_{ij}}^{T}\mathbf{C}_{ij}^{T}\mathbf{C}_{ij}\bm{V}^{{\bm{\omega}}_{j}}_{ij} & \cdots & \mathbf{0} \\
			\vdots        &   &  \vdots &  & \vdots &  & \vdots \\
			\mathbf{0} & \cdots  & 	 {\bm{V}^{{\bm{\omega}}_{j}}_{ij}}^{T}\mathbf{C}_{ij}^{T}\mathbf{C}_{ij}\bm{V}^{{\bm{\rho}}_{i}}_{ij} & \cdots &  {\bm{V}^{{\bm{\omega}}_{j}}_{ij}}^{T}\mathbf{C}_{ij}^{T}\mathbf{C}_{ij}\bm{V}^{{\bm{\omega}}_{j}}_{ij} & \cdots & \mathbf{0} \\
			\vdots        &   &  \vdots &  & \vdots &  & \vdots \\
			\mathbf{0} & \cdots  & 	\mathbf{0} & \cdots & \mathbf{0} & \cdots & \mathbf{0} \\
		\end{bmatrix}
	\end{equation}
	Similarly, using the expression in (\ref{equ_Jr_ij}), ${\mathbf{J}_{ij}^{r}}^{T}\mathbf{J}_{ij}^{r}$ has the form
	\begin{equation} \label{equ_JrTJr}
	{\mathbf{J}_{ij}^{r}}^{T}\mathbf{J}_{ij}^{r} = \begin{bmatrix}
	\mathbf{0} & \cdots  & 	\mathbf{0} & \cdots & \mathbf{0} & \cdots & \mathbf{0} \\
	\vdots        &  &  \vdots &  & \vdots &  & \vdots \\
	\mathbf{0} & \cdots  & 	 {\bm{V}^{{\bm{\rho}}_{i}}_{ij}}^{T}\mathbf{M}_{ij}^{T}\mathbf{M}_{ij}\bm{V}^{{\bm{\rho}}_{i}}_{ij} & \cdots &  {\bm{V}^{{\bm{\rho}}_{i}}_{ij}}^{T}\mathbf{M}_{ij}^{T}\mathbf{M}_{ij}\bm{V}^{{\bm{\omega}}_{j}}_{ij} & \cdots & \mathbf{0} \\
	\vdots        &   &  \vdots &  & \vdots &  & \vdots \\
	\mathbf{0} & \cdots  & 	 {\bm{V}^{{\bm{\omega}}_{j}}_{ij}}^{T}\mathbf{M}_{ij}^{T}\mathbf{M}_{ij}\bm{V}^{{\bm{\rho}}_{i}}_{ij} & \cdots &  {\bm{V}^{{\bm{\omega}}_{j}}_{ij}}^{T}\mathbf{M}_{ij}^{T}\mathbf{M}_{ij}\bm{V}^{{\bm{\omega}}_{j}}_{ij} & \cdots & \mathbf{0} \\
	\vdots        &   &  \vdots &  & \vdots &  & \vdots \\
	\mathbf{0} & \cdots  & 	\mathbf{0} & \cdots & \mathbf{0} & \cdots & \mathbf{0} \\
	\end{bmatrix}
	\end{equation}
		Substituting (\ref{equ_qr}) into $	\bm{V}_{{\bm{\rho}}_{i}}^{T}\mathbf{C}_{ij}^{T}\mathbf{C}_{ij}\bm{V}_{{\bm{\rho}}_{i}}$ and using the fact $\mathbf{Q}_{ij}^{T}\mathbf{Q}_{ij} = \mathbf{I}_{4}$, we have 
	\begin{equation} \label{equ_equvalent_1}
		\begin{split}
			{\bm{V}^{{\bm{\rho}}_{i}}_{ij}}^{T}\mathbf{C}_{ij}^{T}\mathbf{C}_{ij}\bm{V}^{{\bm{\rho}}_{i}}_{ij} & = {\bm{V}^{{\bm{\rho}}_{i}}_{ij}}^{T}\mathbf{M}_{ij}^{T}\left( \mathbf{Q}_{ij}^{T}\mathbf{Q}_{ij}\right) \mathbf{M}_{ij}\bm{V}^{{\bm{\rho}}_{i}}_{ij} \\
			 & = {\bm{V}^{{\bm{\rho}}_{i}}_{ij}}^{T}\mathbf{M}_{ij}^{T}\mathbf{M}_{ij}\bm{V}^{{\bm{\rho}}_{i}}_{ij} 
		\end{split}
	\end{equation}
	Similarly, we have 
	\begin{equation} \label{equ_equvalent_2}
		\begin{split}
			{\bm{V}^{{\bm{\rho}}_{i}}_{ij}}^{T}\mathbf{C}_{ij}^{T}\mathbf{C}_{ij}\bm{V}^{{\bm{\omega}}_{j}}_{ij} &= {\bm{V}^{{\bm{\rho}}_{i}}_{ij}}^{T}\mathbf{M}_{ij}^{T}\mathbf{M}_{ij}\bm{V}^{{\bm{\omega}}_{j}}_{ij}  \\
			{\bm{V}^{{\bm{\omega}}_{j}}_{ij}}^{T}\mathbf{C}_{ij}^{T}\mathbf{C}_{ij}\bm{V}^{{\bm{\rho}}_{i}}_{ij} &= {\bm{V}^{{\bm{\omega}}_{j}}_{ij}}^{T}\mathbf{M}_{ij}^{T}\mathbf{M}_{ij}\bm{V}^{{\bm{\rho}}_{i}}_{ij}  \\
			{\bm{V}^{{\bm{\omega}}_{j}}_{ij}}^{T}\mathbf{C}_{ij}^{T}\mathbf{C}_{ij}\bm{V}^{{\bm{\omega}}_{j}}_{ij} &= {\bm{V}^{{\bm{\omega}}_{j}}_{ij}}^{T}\mathbf{M}_{ij}^{T}\mathbf{M}_{ij}\bm{V}^{{\bm{\omega}}_{j}}_{ij} 
		\end{split}
	\end{equation}
	
	For $i = 1$, according to (\ref{equ_J1j}), the only  non-zero term for $\mathbf{J}_{1j}^{T}\mathbf{J}_{1j}$   is ${\bm{V}^{{\bm{\omega}}_{j}}_{1j}}^{T}\mathbf{C}_{1j}^{T}\mathbf{C}_{1j}\bm{V}^{{\bm{\omega}}_{j}}_{1j}$. On the other hand, according to (\ref{equ_Jr_ij}), ${\mathbf{J}_{1j}^{r}}^{T}\mathbf{J}_{1j}^{r}$ has only one corresponding non-zero term ${\bm{V}^{{\bm{\omega}}_{j}}_{1j}}^{T}\mathbf{M}_{1j}^{T}\mathbf{M}_{1j}\bm{V}^{{\bm{\omega}}_{j}}_{1j} $. Similar to the derivation  in (\ref{equ_equvalent_1}), we have 
	\begin{equation} \label{equ_equvalent_3}
		{\bm{V}^{{\bm{\omega}}_{j}}_{1j}}^{T}\mathbf{C}_{1j}^{T}\mathbf{C}_{1j}\bm{V}^{{\bm{\omega}}_{j}}_{1j} = {\bm{V}^{{\bm{\omega}}_{j}}_{1j}}^{T}\mathbf{M}_{1j}^{T}\mathbf{M}_{1j}\bm{V}^{{\bm{\omega}}_{j}}_{1j}.
	\end{equation}
	
	In summary, 
	using (\ref{equ_equvalent_1}),  (\ref{equ_equvalent_2}) and (\ref{equ_equvalent_3}) , we have $\mathbf{J}_{ij}^{T}\mathbf{J}_{ij} = {\mathbf{J}_{ij}^{r}}^{T}\mathbf{J}_{ij}^{r}$. According to (\ref{equ_sum_JTJ}), consequently we have $\mathbf{J}^{T}\mathbf{J} = {\mathbf{J}^{r}}^{T}\mathbf{J}^{r}$.
\end{proof}

\subsection{Reduced Residual  Vector }
Let us define the  residual vector for the $K_{ij}$ points in $\mathbb{P}_{ij}$ as $\bm{\delta}_{ij} = \left[\delta_{ij1}, \, \delta_{ij2}, \quad \cdots \quad \delta_{ijK_{ij}} \right]^{T}$. According to (\ref{equ_delta_exp}) and (\ref{equ_C_ij}), $\bm{\delta}_{ij}$ can be written as
\begin{equation} \label{equ_delta_ij}
\bm{\delta}_{ij} =	\mathbf{C}_{ij}\bm{\nu}_{ij}.
\end{equation}
We define the \textbf{ reduced residual vector} $\bm{\delta}_{ij}^{r}$  of $\bm{\delta}_{ij}$ as
\begin{equation} \label{equ_red_delta_ij}
\bm{\delta}_{ij}^{r} =	\mathbf{M}_{ij}\bm{\nu}_{ij}
\end{equation}
 Stacking   all $\bm{\delta}_{ij}$ and $\bm{\delta}_{ij}^{r}$, we get the  residual vector $\bm{\delta}$  and the reduced residual vector $\bm{\delta}^{r}$  as 
\begin{equation} \label{equ_delta_deltar}
	\bm{\delta} = \begin{bmatrix}
		\vdots \\
		\bm{\delta}_{ij} \\
		\vdots
	\end{bmatrix}, \quad
	\bm{\delta}^{r} = \begin{bmatrix}
		\vdots \\
		\bm{\delta}_{ij}^{r} \\
		\vdots
	\end{bmatrix}.
\end{equation}
The following lemma shows that  $	\bm{\delta}^{r} $ can replace 	$\bm{\delta}$ in the  LM algorithm.
\begin{lemma} \label{lemma_J_delta}
	 For the planar BA, we have $\mathbf{J}^{T}\bm{\delta} = {\mathbf{J}^r}^{T}\bm{\delta}^{r} $. 
\end{lemma}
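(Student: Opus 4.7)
The plan is to mimic the proof of Lemma 2 almost verbatim, with the factor $\mathbf{C}_{ij}^{T}\bm{\delta}_{ij}$ standing in for $\mathbf{C}_{ij}^{T}\mathbf{C}_{ij}$. First I would use the block-vector stacking in (\ref{equ_J_Jr}) and (\ref{equ_delta_deltar}) together with the definitions in (\ref{equ_delta_ij}) and (\ref{equ_red_delta_ij}) to reduce the claim to a summation of per-block identities,
\begin{equation*}
\mathbf{J}^{T}\bm{\delta} \;=\; \sum_{i,j}\mathbf{J}_{ij}^{T}\bm{\delta}_{ij}, \qquad {\mathbf{J}^{r}}^{T}\bm{\delta}^{r} \;=\; \sum_{i,j}{\mathbf{J}_{ij}^{r}}^{T}\bm{\delta}_{ij}^{r},
\end{equation*}
so it suffices to show $\mathbf{J}_{ij}^{T}\bm{\delta}_{ij} = {\mathbf{J}_{ij}^{r}}^{T}\bm{\delta}_{ij}^{r}$ for each pair $(i,j)$.

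Next, for the case $i\neq 1$ I would substitute the block form of $\mathbf{J}_{ij}$ from (\ref{equ_Jij_block}) and $\bm{\delta}_{ij}=\mathbf{C}_{ij}\bm{\nu}_{ij}$ from (\ref{equ_delta_ij}). Then $\mathbf{J}_{ij}^{T}\bm{\delta}_{ij}$ is a column vector whose only non-zero blocks sit at the $i$-th pose slot and the $j$-th plane slot and equal
\begin{equation*}
{\bm{V}^{{\bm{\rho}}_{i}}_{ij}}^{T}\mathbf{C}_{ij}^{T}\mathbf{C}_{ij}\bm{\nu}_{ij}, \qquad {\bm{V}^{{\bm{\omega}}_{j}}_{ij}}^{T}\mathbf{C}_{ij}^{T}\mathbf{C}_{ij}\bm{\nu}_{ij}.
\end{equation*}
In exactly the same way, using (\ref{equ_Jr_ij}) and (\ref{equ_red_delta_ij}), ${\mathbf{J}_{ij}^{r}}^{T}\bm{\delta}_{ij}^{r}$ has non-zero blocks ${\bm{V}^{{\bm{\rho}}_{i}}_{ij}}^{T}\mathbf{M}_{ij}^{T}\mathbf{M}_{ij}\bm{\nu}_{ij}$ and ${\bm{V}^{{\bm{\omega}}_{j}}_{ij}}^{T}\mathbf{M}_{ij}^{T}\mathbf{M}_{ij}\bm{\nu}_{ij}$ in the same locations.

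The key algebraic step—and essentially the only one doing any work—is to invoke Lemma \ref{lemma_C_ij}: since $\mathbf{C}_{ij}=\mathbf{Q}_{ij}\mathbf{M}_{ij}$ with $\mathbf{Q}_{ij}^{T}\mathbf{Q}_{ij}=\mathbf{I}_{4}$, we get $\mathbf{C}_{ij}^{T}\mathbf{C}_{ij}=\mathbf{M}_{ij}^{T}\mathbf{M}_{ij}$, so the two pairs of blocks coincide. This is exactly the collapse that was used in (\ref{equ_equvalent_1})–(\ref{equ_equvalent_2}); here it is applied to the vector $\bm{\nu}_{ij}$ rather than another Jacobian block.

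Finally, the $i=1$ case is handled by the same calculation restricted to the plane block only, because (\ref{equ_J1j}) kills the pose column of $\mathbf{J}_{1j}$, leaving only ${\bm{V}^{{\bm{\omega}}_{j}}_{1j}}^{T}\mathbf{C}_{1j}^{T}\mathbf{C}_{1j}\bm{\nu}_{1j}$ on the left and ${\bm{V}^{{\bm{\omega}}_{j}}_{1j}}^{T}\mathbf{M}_{1j}^{T}\mathbf{M}_{1j}\bm{\nu}_{1j}$ on the right, which are equal by the same identity. Summing over all $(i,j)$ closes the proof. I do not anticipate a real obstacle: the entire argument is the block-multiplication bookkeeping of Lemma \ref{lemma_JJ} re-applied with one of the two Jacobian factors replaced by $\bm{\nu}_{ij}$, and the thin QR relation $\mathbf{Q}_{ij}^{T}\mathbf{Q}_{ij}=\mathbf{I}_4$ is what makes $\mathbf{C}_{ij}$ and $\mathbf{M}_{ij}$ interchangeable inside the Gram product.
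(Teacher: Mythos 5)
Your proposal is correct and follows essentially the same route as the paper's own proof: block-wise reduction via (\ref{equ_J_Jr}) and (\ref{equ_delta_deltar}), identification of the two non-zero blocks of $\mathbf{J}_{ij}^{T}\bm{\delta}_{ij}$ for $i\neq 1$ (plane block only for $i=1$), and the collapse $\mathbf{C}_{ij}^{T}\mathbf{C}_{ij}=\mathbf{M}_{ij}^{T}\mathbf{M}_{ij}$ from Lemma \ref{lemma_C_ij} with $\mathbf{Q}_{ij}^{T}\mathbf{Q}_{ij}=\mathbf{I}_4$. No gaps; this matches the paper's argument step for step.
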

\begin{proof}
	$\mathbf{J}$, $\mathbf{J}^{r}$,  $\mathbf{\bm{\delta}}$ and $\mathbf{\bm{\delta}^{r}}$ are block vectors with elements $\mathbf{J}_{ij}$,  $\mathbf{J}_{ij}^{r}$, $\mathbf{\bm{\delta}}_{ij}$ and $\mathbf{\bm{\delta}}_{ij}^{r}$  as defined in (\ref{equ_J_Jr}) and (\ref{equ_delta_deltar}), respectively. Applying the block matrix multiplication, we have 
	\begin{equation} \label{equ_J_delta}
		\mathbf{J}^{T}\bm{\delta} = \sum_{i,j}\mathbf{J}_{ij}^{T}\bm{\delta}_{ij}, \quad {\mathbf{J}^{r}}^{T}\bm{\delta}^{r} = \sum_{i,j}{\mathbf{J}_{ij}^{r}}^{T}\bm{\delta}_{ij}^{r}
	\end{equation}
	
	For $i \neq 1$, using the expression of $\mathbf{J}_{ij}$ in (\ref{equ_Jij_block}) and $\mathbf{J}_{ij}^{r}$ in (\ref{equ_Jr_ij}), and the expression of $\bm{\delta}_{ij}$ in (\ref{equ_delta_ij}) and $\bm{\delta}_{ij}^{r}$ in (\ref{equ_red_delta_ij}),  $\mathbf{J}_{ij}^{T}\bm{\delta}_{ij}$ and ${\mathbf{J}_{ij}^{r}}^{T}\bm{\delta}_{ij}^{r}$ have the forms as
	\begin{equation} \label{equ_J_delta_Jr_delta}
		\mathbf{J}_{ij}^{T}\bm{\delta}_{ij} =\begin{bmatrix}
				\mathbf{0} \\
				\vdots        \\
				{\bm{V}^{\bm{\rho}_{i}}_{ij}}^{T}\mathbf{C}_{ij}^{T}\mathbf{C}_{ij}\bm{\nu}_{ij}\\
				\vdots        \\
				{\bm{V}^{\bm{\omega}_{j}}_{ij}}^{T}\mathbf{C}_{ij}^{T}\mathbf{C}_{ij}\bm{\nu}_{ij} \\
				\vdots \\
				\mathbf{0}
		\end{bmatrix}, \quad 
		{\mathbf{J}_{ij}^{r}}^{T}\bm{\delta}_{ij}^{r} =\begin{bmatrix}
			\mathbf{0} \\
			\vdots        \\
			{\bm{V}^{\bm{\rho}_{i}}_{ij}}^{T}\mathbf{M}_{ij}^{T}\mathbf{M}_{ij}\bm{\nu}_{ij} \\
			\vdots        \\
			{\bm{V}^{\bm{\omega}_{j}}_{ij}}^{T}\mathbf{M}_{ij}^{T}\mathbf{M}_{ij}\bm{\nu}_{ij} \\
			\vdots \\
			\mathbf{0}
		\end{bmatrix} 
	\end{equation}
	Substituting (\ref{equ_qr}) into 	${\bm{V}^{\bm{\rho}_{i}}_{ij}}^{T}\mathbf{C}_{ij}^{T}\mathbf{C}_{ij}\bm{\nu}_{ij}$ and  using the fact $\mathbf{Q}_{ij}^{T}\mathbf{Q}_{ij} = \mathbf{I}_{4}$, we have 
	\begin{equation} \label{equ_equvalent_delta_1}
		\begin{split}
			{\bm{V}^{\bm{\rho}_{i}}_{ij}}^{T}\mathbf{C}_{ij}^{T}\mathbf{C}_{ij}\bm{\nu}_{ij} & = {\bm{V}^{{\bm{\rho}}_{i}}_{ij}}^{T}\mathbf{M}_{ij}^{T}\left( \mathbf{Q}_{ij}^{T}\mathbf{Q}_{ij}\right) \mathbf{M}_{ij}\bm{\nu}_{ij} \\
			& = {\bm{V}^{{\bm{\rho}}_{i}}_{ij}}^{T}\mathbf{M}_{ij}^{T}\mathbf{M}_{ij}\bm{\nu}_{ij}
		\end{split}
	\end{equation}
	Similarly, we have 
	\begin{equation} \label{equ_equvalent_delta_2}
		{\bm{V}^{\bm{\omega}_{j}}_{ij}}^{T}\mathbf{C}_{ij}^{T}\mathbf{C}_{ij}\bm{\nu}_{ij} = {\bm{V}^{\bm{\omega}_{j}}_{ij}}^{T}\mathbf{M}_{ij}^{T}\mathbf{M}_{ij}\bm{\nu}_{ij}
	\end{equation}
	
	For $ i = 1$, substituting (\ref{equ_J1j}) and (\ref{equ_delta_ij}) into $\mathbf{J}_{1j}^{T}\bm{\delta}_{1j}$ and applying the block matrix multiplication, we find the only non-zero term of $\mathbf{J}_{1j}^{T}\bm{\delta}_{1j}$ is ${\bm{V}^{\bm{\omega}_{j}}_{ij}}^{T}\mathbf{C}_{1j}^{T}\mathbf{C}_{1j}\bm{\nu}_{1j} $. On the other hand, substituting (\ref{equ_Jr_ij}) and (\ref{equ_red_delta_ij}) into ${\mathbf{J}_{1j}^{r}}^{T}\bm{\delta}_{1j}^{r}$, we find that ${\mathbf{J}_{1j}^{r}}^{T}\bm{\delta}_{1j}^{r}$ only has one non-zero term ${\bm{V}^{\bm{\omega}_{j}}_{ij}}^{T}\mathbf{M}_{1j}^{T}\mathbf{M}_{1j}\bm{\nu}_{1j} $. Similar to the derivation  in (\ref{equ_equvalent_delta_1}), we have
	\begin{equation}\label{equ_equvalent_delta_3}
		{\bm{V}^{\bm{\omega}_{j}}_{ij}}^{T}\mathbf{C}_{1j}^{T}\mathbf{C}_{1j}\bm{\nu}_{1j}  = {\bm{V}^{\bm{\omega}_{j}}_{ij}}^{T}\mathbf{M}_{1j}^{T}\mathbf{M}_{1j}\bm{\nu}_{1j} 
	\end{equation}
	
	In summary, from  (\ref{equ_equvalent_delta_1}), (\ref{equ_equvalent_delta_2}) and (\ref{equ_equvalent_delta_3}), we have $\mathbf{J}_{1j}^{T}\bm{\delta}_{1j} = {\mathbf{J}_{1j}^{r}}^{T}\bm{\delta}_{1j}^{r}$. According to (\ref{equ_J_delta}), we finally get $\mathbf{J}^{T}\bm{\delta} = {\mathbf{J}^r}^{T}\bm{\delta}^{r}$. 
\end{proof}



\subsection{Planar Bundle Adjustment Algorithm}
According to Lemma \autoref{lemma_C_ij}, Lemma \autoref{lemma_JJ} and Lemma \ref{lemma_J_delta}, we have the following theorem.

\begin{theorem} \label{theorem_LM_reduce}
	For the planar BA, $\mathbf{J}^{r}$ and $\bm{\delta}^{r}$ can replace $\mathbf{J}$ and $\bm{\delta}$ in   (\ref{equ_lm_update}) to compute the step in the LM algorithm, and  each block $\mathbf{J}^{r}_{ij}$ and $\bm{\delta}^{r}_{ij}$ in $\mathbf{J}^{r}$ and $\bm{\delta}^{r}$ has 4 rows.
\end{theorem}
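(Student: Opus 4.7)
The proof is essentially a composition of the three preceding lemmas, so the plan is short. First I would observe that within a single LM iteration the only dependence on the Jacobian and residual comes through the two quantities $\mathbf{J}^T\mathbf{J}$ and $\mathbf{J}^T\bm{\delta}$ appearing in the normal equation (\ref{equ_lm_update}). Lemma \ref{lemma_JJ} gives $\mathbf{J}^T\mathbf{J} = {\mathbf{J}^r}^T\mathbf{J}^r$ and Lemma \ref{lemma_J_delta} gives $\mathbf{J}^T\bm{\delta} = {\mathbf{J}^r}^T\bm{\delta}^r$. Substituting both identities into (\ref{equ_lm_update}) shows that, for every value of $\lambda$, the linear system solved with $(\mathbf{J}^r,\bm{\delta}^r)$ is literally the same system solved with $(\mathbf{J},\bm{\delta})$, so its unique solution $\bm{\xi}$ is identical in the two cases.

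Second, because the LM algorithm also uses the cost $\tfrac{1}{2}\|\bm{\delta}\|^2$ to accept or reject the step and to adjust $\lambda$, I would briefly verify that this scalar is preserved as well. Using the factorization $\mathbf{C}_{ij} = \mathbf{Q}_{ij}\mathbf{M}_{ij}$ with $\mathbf{Q}_{ij}^T\mathbf{Q}_{ij} = \mathbf{I}_4$ from Lemma \ref{lemma_C_ij}, together with $\bm{\delta}_{ij} = \mathbf{C}_{ij}\bm{\nu}_{ij}$ and $\bm{\delta}_{ij}^r = \mathbf{M}_{ij}\bm{\nu}_{ij}$, a one-line computation yields $\|\bm{\delta}_{ij}\|_2^2 = \bm{\nu}_{ij}^T\mathbf{M}_{ij}^T\mathbf{Q}_{ij}^T\mathbf{Q}_{ij}\mathbf{M}_{ij}\bm{\nu}_{ij} = \|\bm{\delta}_{ij}^r\|_2^2$, and summing over $(i,j)$ gives $\|\bm{\delta}\|_2^2 = \|\bm{\delta}^r\|_2^2$. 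Hence every decision made inside LM, not just the linear solve, is unchanged by the substitution, and the two algorithms produce identical iterate sequences.

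Third, for the row-count claim I would simply read it off the definitions. Lemma \ref{lemma_C_ij} asserts $\mathbf{M}_{ij}\in\mathbb{R}^{4\times 13}$, so each nonzero block $\mathbf{M}_{ij}\mathbf{V}^{\bm{\rho}_i}_{ij}$ or $\mathbf{M}_{ij}\mathbf{V}^{\bm{\omega}_j}_{ij}$ appearing in the definition (\ref{equ_Jr_ij}) of $\mathbf{J}^r_{ij}$ has exactly $4$ rows, and $\bm{\delta}^r_{ij} = \mathbf{M}_{ij}\bm{\nu}_{ij}\in\mathbb{R}^4$. The hard work has already been done in the three lemmas, so there is no real obstacle at this stage; the only point worth emphasising is that the bound $4$ is independent of $K_{ij}$, which is the structural reason for the computational savings but is not itself part of what must be proved.
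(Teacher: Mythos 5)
Your proof is correct and takes essentially the same route as the paper: substitute the identities of Lemma \ref{lemma_JJ} ($\mathbf{J}^{T}\mathbf{J} = {\mathbf{J}^{r}}^{T}\mathbf{J}^{r}$) and Lemma \ref{lemma_J_delta} ($\mathbf{J}^{T}\bm{\delta} = {\mathbf{J}^{r}}^{T}\bm{\delta}^{r}$) into the normal equations (\ref{equ_lm_update}), and read the $4$-row claim directly off the $4 \times 13$ size of $\mathbf{M}_{ij}$ from Lemma \ref{lemma_C_ij}. Your extra verification that $\|\bm{\delta}_{ij}\|_2^2 = \|\bm{\delta}_{ij}^{r}\|_2^2$, so that the cost monitoring and $\lambda$-update decisions inside LM are also unchanged, goes slightly beyond what the paper's proof records, but it is a correct minor strengthening rather than a different approach.
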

\begin{proof}
	LM algorithm uses (\ref{equ_lm_update}) to calculate the step for each iteration. According to Lemma \autoref{lemma_JJ}, we have $\mathbf{J}^{T}\mathbf{J} = {\mathbf{J}^{r}}^{T}\mathbf{J}^{r}$. Besides, based on  Lemma \ref{lemma_J_delta}, we have $\mathbf{J}^{T}\bm{\delta} = {\mathbf{J}^r}^{T}\bm{\delta}^{r} $. Consequently, we have  $\left( {\mathbf{J}^r}^{T} \mathbf{J}^r+\lambda\mathbf{I}\right) \bm{\xi} =  {\mathbf{J}^{r}}^{T} \bm{\delta}^{r} $  is equivalent to  $\left( \mathbf{J}^{T} \mathbf{J}+\lambda\mathbf{I}\right) \bm{\xi} =  \mathbf{J}^{T} \bm{\delta}$. Thus $\mathbf{J}^{r}$ and $\bm{\delta}^{r}$ can replace $\mathbf{J}$ and $\bm{\delta}$ for computing the step in the LM algorithm.
	
	According to the definition of  $\mathbf{J}^{r}_{ij}$ in (\ref{equ_Jr_ij}) and $\bm{\delta}^{r}_{ij}$ in (\ref{equ_red_delta_ij}), we know that the number of rows of  $\mathbf{J}^{r}_{ij}$ and  $\bm{\delta}^{r}_{ij}$ is the same as the number of rows of $\mathbf{M}_{ij}$. According to Lemma \autoref{lemma_C_ij},  $\mathbf{M}_{ij}$ has 4 rows. Consequently, we have that  $\mathbf{J}^{r}_{ij}$ and $\bm{\delta}^{r}_{ij}$ have 4 rows. 
\end{proof}

As mentioned in  \autoref{theorem_LM_reduce}, no matter how many points are  in $\mathbb{P}_{ij}$, the reduced  $\mathbf{J}^{r}_{ij}$ and $\bm{\delta}_{ij}^{r}$ always have 4 rows. This significantly reduces the computational cost in the LM algorithm. Specifically, we have the following corollary:

\begin{corollary} \label{corollary_reduce_time}
The runtime for computing $\mathbf{J}_{ij}^r$, $\bm{\delta}_{ij}^r$, ${\mathbf{J}_{ij}^{r}}^{T}\mathbf{J}_{ij}^{r}$ and ${\mathbf{J}_{ij}^{r}}^{T}\bm{\delta}_{ij}^r$ is  $\frac{4}{K_{ij}}$ relative to computing  the original $\mathbf{J}_{ij}$, $\bm{\delta}_{ij}$, $\mathbf{J}_{ij}^{T}\mathbf{J}_{ij}$ and $\mathbf{J}_{ij}^{T}\bm{\delta}_{ij}$, respectively. 
\end{corollary}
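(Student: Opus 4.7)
The plan is to count the dominant floating-point operations for each of the four quantities and observe that in every case the only dimension that depends on $K_{ij}$ is the shared outer (or inner) dimension of a product involving $\mathbf{C}_{ij}$. Substituting $\mathbf{M}_{ij}$, which has $4$ rows by Lemma~\ref{lemma_C_ij}, for $\mathbf{C}_{ij}$, which has $K_{ij}$ rows, replaces $K_{ij}$ by $4$ in every operation count, giving the stated ratio $4/K_{ij}$. In other words, the proof is a direct reading of the formulas (\ref{equ_Jij_block}), (\ref{equ_Jr_ij}), (\ref{equ_delta_ij}), (\ref{equ_red_delta_ij}), (\ref{equ_JTJ}) and (\ref{equ_J_delta_Jr_delta}), plus elementary cost-counts for matrix products.

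First I would handle $\bm{\delta}_{ij}$: by (\ref{equ_delta_ij}) it is the product of the $K_{ij}\times 13$ matrix $\mathbf{C}_{ij}$ with the $13$-vector $\bm{\nu}_{ij}$, costing $O(13 K_{ij})$ multiply-adds, whereas the reduced version (\ref{equ_red_delta_ij}) is the product of the $4\times 13$ matrix $\mathbf{M}_{ij}$ with the same vector, costing $O(13\cdot 4)$. For $\mathbf{J}_{ij}$, the block form (\ref{equ_Jij_block}) shows that the dominant work lies in forming the two nonzero blocks $\mathbf{C}_{ij}\mathbf{V}^{\bm{\rho}_{i}}_{ij}$ and $\mathbf{C}_{ij}\mathbf{V}^{\bm{\omega}_{j}}_{ij}$; since the $\mathbf{V}$ factors have $O(1)$ columns and $13$ rows, each costs a constant multiple of $K_{ij}$, while the reduced products $\mathbf{M}_{ij}\mathbf{V}^{\bm{\rho}_{i}}_{ij}$ and $\mathbf{M}_{ij}\mathbf{V}^{\bm{\omega}_{j}}_{ij}$ cost the same constant times $4$. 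For $\mathbf{J}_{ij}^T\mathbf{J}_{ij}$, once the $13\times 13$ Gram matrix $\mathbf{C}_{ij}^T\mathbf{C}_{ij}$ is computed the rest of the work in (\ref{equ_JTJ}) is independent of $K_{ij}$; the Gram cost $13\cdot 13\cdot K_{ij}$ shrinks to $13\cdot 13\cdot 4$ when $\mathbf{C}_{ij}$ is replaced by $\mathbf{M}_{ij}$. For $\mathbf{J}_{ij}^T\bm{\delta}_{ij}$, the $K_{ij}$-dependent part read off from (\ref{equ_J_delta_Jr_delta}) is the triple product $\mathbf{C}_{ij}^T\mathbf{C}_{ij}\bm{\nu}_{ij}$, which evaluated as $\mathbf{C}_{ij}^T(\mathbf{C}_{ij}\bm{\nu}_{ij})$ costs $O(13 K_{ij})$ and drops to $O(13\cdot 4)$ for $\mathbf{M}_{ij}$.

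The one point that requires care, and the only mild obstacle, is cleanly isolating the $K_{ij}$-dependent work from the $K_{ij}$-independent work (the multiplications by the $\mathbf{V}$ matrices of $O(1)$ size, and the arithmetic on the $13\times 13$ Gram block), since the reduction only compresses the $K_{ij}$ dimension. Once one fixes the same multiplication order in the original and reduced computations and restricts attention to the $K_{ij}$-dependent factor in each case, the ratio $4/K_{ij}$ falls out immediately for all four quantities simultaneously.
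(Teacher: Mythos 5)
Your proof is correct and takes essentially the same route as the paper's: both arguments amount to observing that in each of the four computations the only $K_{ij}$-dependent dimension is the row (or inner) dimension contributed by $\mathbf{C}_{ij}$, i.e.\ by $\mathbf{J}_{ij}$ and $\bm{\delta}_{ij}$, which the reduction replaces by $4$, so the operation count scales by $\frac{4}{K_{ij}}$. Your explicit isolation of the $K_{ij}$-independent overhead (the $O(1)$-sized $\mathbf{V}$ products and the fixed multiplication order) is a slightly more careful accounting than the paper's, which simply notes that $\mathbf{M}_{ij}$ (4 rows) replaces $\mathbf{C}_{ij}$ ($K_{ij}$ rows) and invokes the rules of matrix multiplication for the Gram products.
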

\begin{proof}
	From the expressions for $\mathbf{J}^{r}_{ij}$ in (\ref{equ_Jr_ij}) and $\mathbf{J}_{ij}$ in (\ref{equ_Jij}) and (\ref{equ_J1j}), we know that the difference between $\mathbf{J}^{r}_{ij}$ and  $\mathbf{J}_{ij}$ is that we use $\mathbf{M}_{ij}$ to replace $\mathbf{C}_{ij}$. $\mathbf{M}_{ij}$ has 4 rows and  $\mathbf{C}_{ij}$ has $K_{ij}$ rows. Thus, the runtime for computing $\mathbf{J}^{r}_{ij}$ is $\frac{4}{K_{ij}}$  relative to computing $\mathbf{J}_{ij}$. Similarly, the runtime for computing $\bm{\delta}^{r}_{ij}$ is $\frac{4}{K_{ij}}$  relative to computing $\bm{\delta}_{ij}$.

According to \autoref{theorem_LM_reduce}, $\mathbf{J}_{ij}^{r}$ and $\bm{\delta}_{ij}^{r}$ each has 4 rows, and $\mathbf{J}_{ij}$ and $\bm{\delta}_{ij}$ each  has $K_{ij}$ rows. According to the rules of matrix multiplication,   the runtime for computing ${\mathbf{J}_{ij}^{r}}^{T}\mathbf{J}_{ij}^{r}$ and ${\mathbf{J}_{ij}^{r}}^{T}\bm{\delta}_{ij}^r$ is $\frac{4}{K_{ij}}$ relative to computing  $\mathbf{J}_{ij}^{T}\mathbf{J}_{ij}$ and $\mathbf{J}_{ij}^{T}\bm{\delta}_{ij}$, respectively.
\end{proof}

The additional  cost here is to calculate $\mathbf{C}_{ij}=\mathbf{Q}_{ij}\mathbf{M}_{ij}$. As $\mathbf{C}_{ij}$ keeps constant during the iteration, we only need to compute it once before the iteration. As shown in our experimental results, this step only slightly increases the computational time for  the pose graph initialization step. 
We summarize our PBA in Algorithm \autoref{algo_pba}.
\begin{algorithm} \label{algo_pba}
	\SetAlgoLined
	\KwInput {Initial guess of $N-1$ poses  and $M$ plane parameters, and the  measurements $\left\lbrace \mathbb{P}_{ij} \right\rbrace$.}
	\KwResult{Refined poses and plane parameters.}
	\tcp{Initialization}
	Calculate $\mathbf{c}_{ijk}$ for each $\mathbf{p}_{ijk} \in \mathbb{P}_{ij}$ as (\ref{equ_delta_definition})\;
	Stack $\mathbf{c}_{ijk}$ to get $\mathbf{C}_{ij}$ as (\ref{equ_C_ij})\;
	Compute the factorization $\mathbf{C}_{ij}=\mathbf{Q}_{ij}\mathbf{M}_{ij}$ as mentioned in Lemma \autoref{lemma_C_ij}\;
	\tcp{Iterative Refine}
	\While{not converge}
		{ Compute the reduced Jacobian matrix block $\mathbf{J}_{ij}^{r}$ in (\ref{equ_Jr_ij}) and the reduced residual block $\bm{\delta}_{ij}^{r}$ in (\ref{equ_red_delta_ij})\;
		   Stack them to form $\mathbf{J}^{r}$ and $\bm{\delta}^{r}$\;
		   Use the LM algorithm to update  current estimate\;
		}  
			\caption{Planar Bundle Adjustment}
\end{algorithm}

\textbf{Combine Planes with Other Features} \quad  
Planes are sometimes together with other features, such as points \cite{taguchi2013point,wang2019submap,grant2019efficient,yang2019tightly,li2019robust}. In the  cost function derived from multiple features, the Jacobian matrix  from the plane cost would have the same form as (\ref{equ_Jij_block}) and (\ref{equ_J1j}), and the  residual vector  would also have the same form as (\ref{equ_delta_ij}). Therefore,  they can  be replaced by our reduced Jacobian matrix and reduced residual vector in the BA with    multiple features.

\begin{figure*}[tb]
	\centering 
	\includegraphics[width=\textwidth]{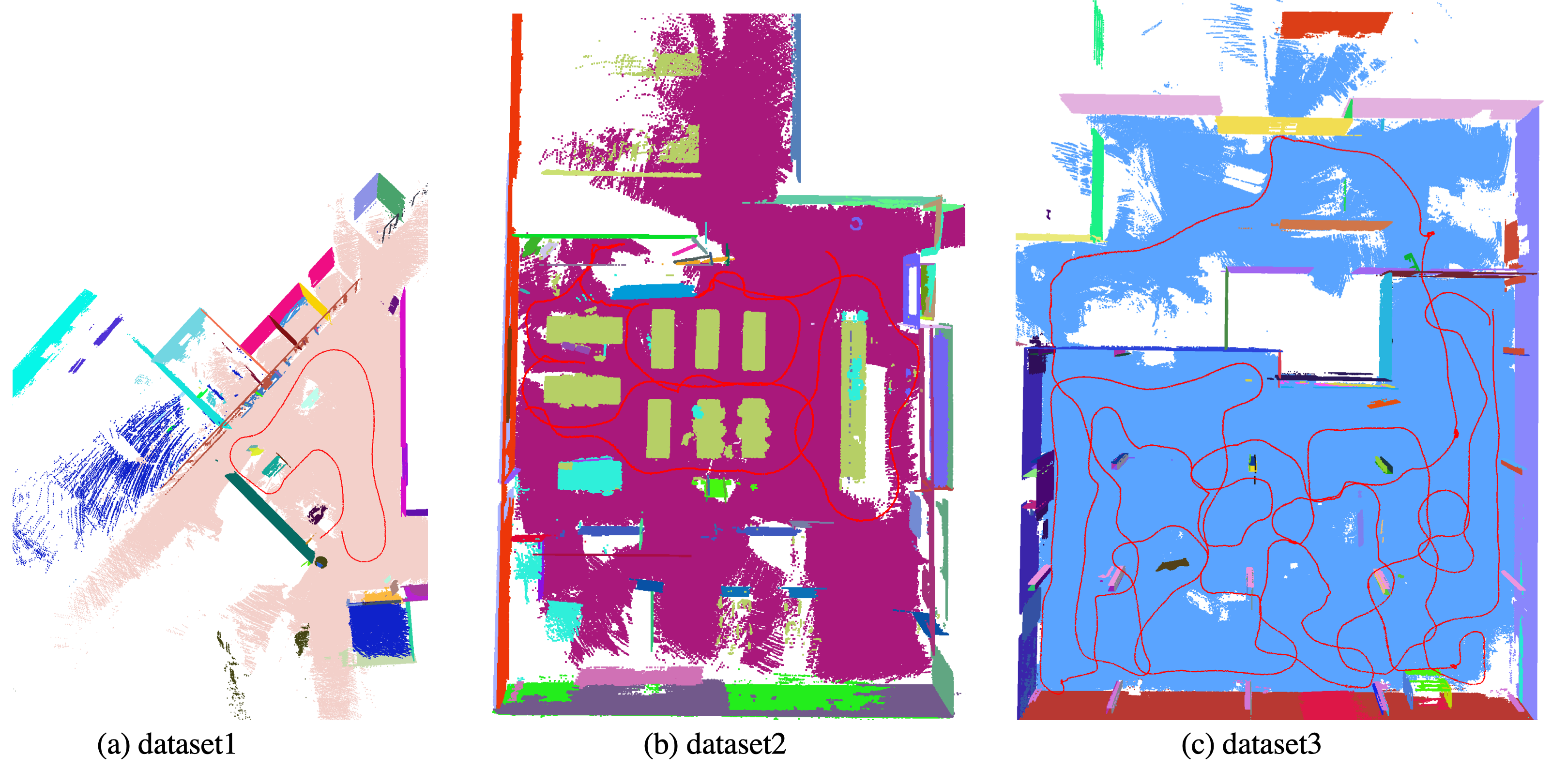}  
	\caption{The three datasets used in our experiments.} 
	\label{fig:dataset}
\end{figure*}

\begin{figure*}[!h]
	\centering 
	\includegraphics[width=\textwidth]{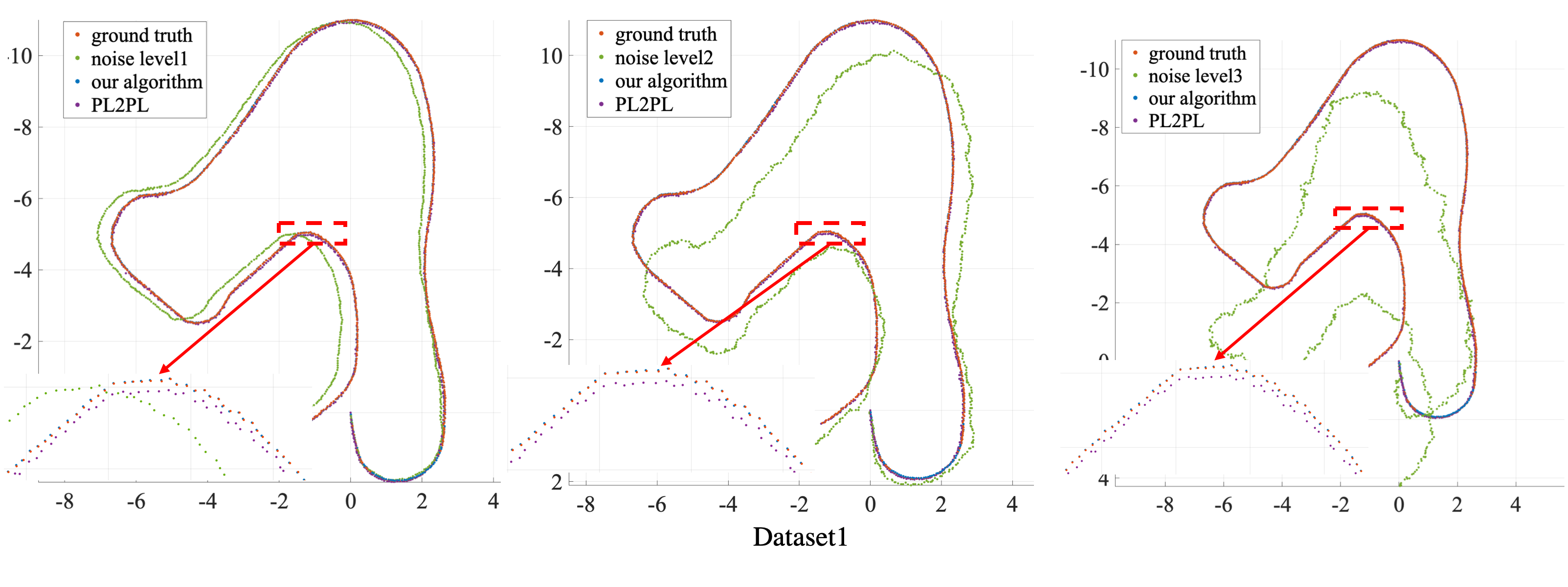}
	\includegraphics[width=\textwidth]{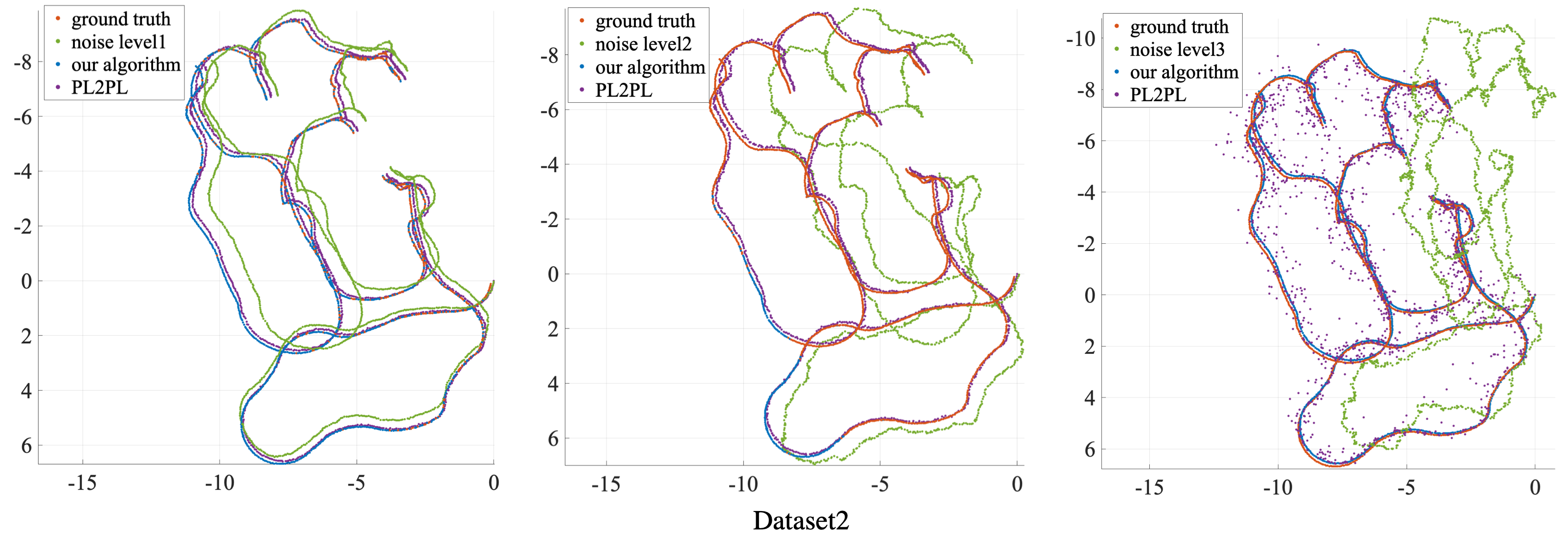}
	\includegraphics[width=\textwidth]{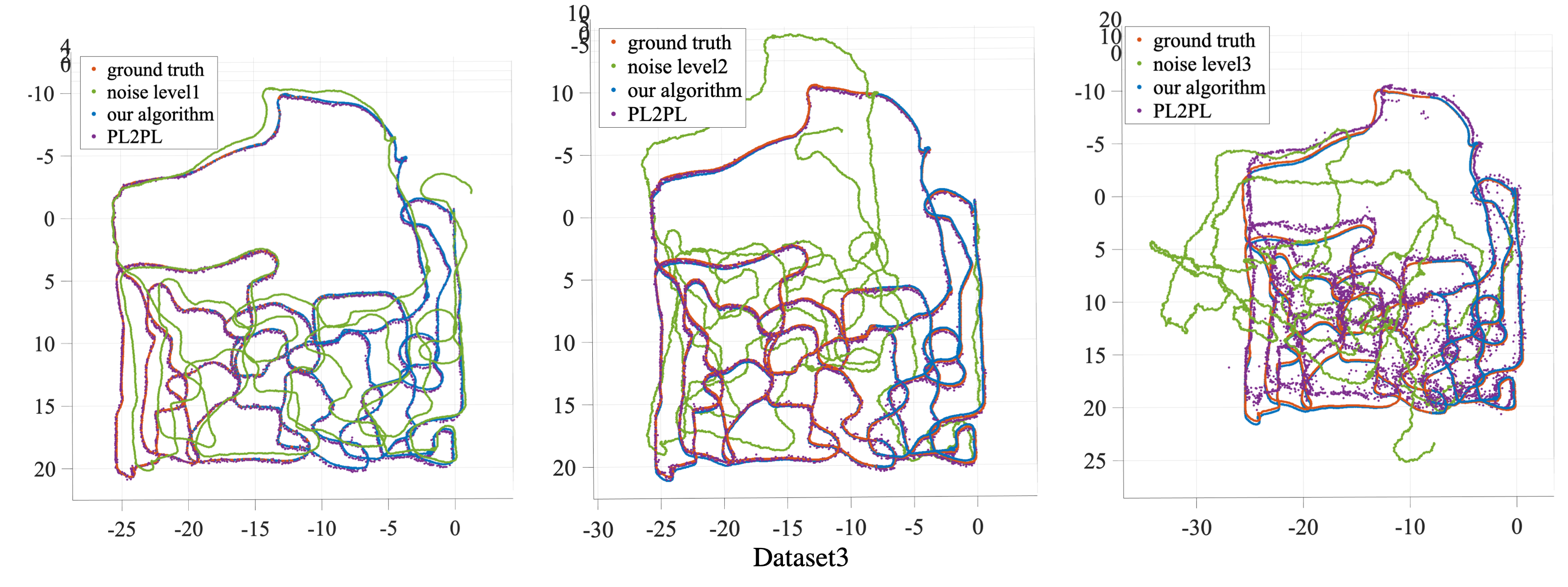} 
	\caption{Experimental results of our algorithm and PL2PL \cite{geneva2018lips} with varying initialization noise levels. PL2PL is not stable to large initial errors.} 
	\label{fig:results}
\end{figure*}

\begin{table}[tb]
	\caption{The characteristics of the 3 datasets.}
	\label{tab:datasets}
	\scriptsize%
	\centering%
	\begin{tabular}{c c c c c c}
		\toprule
		dataset & \#poses &   \#planes &   \#points &   length ($m$)  \\
		\hline
		dataset1 & 695   & 154  & $6.98 \times 10^6$ &  43.2 \\
		dataset2 & 1781 & 370 & $16.82 \times 10^6$ &  104.4   \\
		dataset3 & 6547 &  591& $68.99 \times 10^6$  & 403.5\\
		\bottomrule
	\end{tabular}%
\end{table}

\begin{table*}[tb]
	\caption{Experimental results for different algorithms.  The column  QR shows the  time for the QR decomposition of our algorithm. The column Init. presents the time for initializing the Ceres solver. The column Optimization describes the total time for the LM algorithm. The column Per Iter. lists the average runtime for each LM iteration. Our algorithm has a lower initialization time.  The computational cost for QR decomposition is ignorable  compared to the significant gain from the optimization process. Specifically, our algorithm is about 74, 49 and  107 times faster than DPT2PL on the three datasets, respectively. Here we only evaluate DPT2PL on Noise Level 1 for the 3 datasets, as we only consider  the computational time of DPT2PL. For a specific dataset,  the  runtime of one iteration for different  noise levels  is similar. Our algorithm is more accurate, has lower computational complexity and converges faster than PL2PL\cite{geneva2018lips}.} 
	\label{tab:results}
	\scriptsize%
	\centering%
	\begin{tabular}{ c | c l c  c c | c c | c c c c}
		\toprule
		\multirow{2}{*}{Dataset} & \multirow{2}{*}{Noise Level}  & \multirow{2}{*}{Method}  &   \multirow{2}{*}{\#Iter}   &  \multicolumn{2}{c}{ATE}  &  \multicolumn{2}{c}{ Cost in Eq. (\ref{equ_cost}) }  &  \multicolumn{4}{c}{Time (s)} \\ \cline{5-6} \cline{7-8} \cline{9-12}
		& & & &$\text{ATE}_{\mathbf{R}}$ (\si{\degree}) & $\text{ATE}_{\mathbf{t}}$ ($m$)& Initial  &  Final & QR &  Init.& Optimization & Per Iter.\\
		\hline
		\multirow{7}{*}{dataset1}  
		& \multirow{3}{*}{level 1} 
		& Ours & 81& $4.44 \times 10^{-2}$ &  $3.07 \times 10^{-4}$  &  $2.38 \times 10^5$ & $1.42 \times 10^3$ & 1.16 & 0.016 & 11.06 & 0.14\\
		&  &DPT2PL & 81 &  $4.44 \times 10^{-2}$ &  $3.07 \times 10^{-4}$  &  $2.38 \times 10^5$ & $1.42 \times 10^3$ & 0 & 6.26 & 823.31 &  10.16\\ 
		&  &PL2PL\cite{geneva2018lips} &  384 & 0.58  & 0.28 & $2.38 \times 10^5$ &$1.98 \times 10^4$ & 0 & 0.026 & 69.53 & 0.18\\  \cline{2-12}
		& \multirow{2}{*}{level 2} 
		& Ours & 176 & $4.68 \times 10^{-2}$ & $ 3.98 \times 10^{-4}$  &  $1.78 \times 10^6$ & $1.42 \times 10^3$ & 1.18 & 0.017 & 22.86 & 0.13  \\
		&  &PL2PL\cite{geneva2018lips} &  1000 & 0.58 &  0.28 & $1.78 \times 10^6$ & $1.98 \times 10^4$ & 0 & 0.025 & 185.97 & 0.19\\ \cline{2-12}
		& \multirow{2}{*}{level 3} 
		& Ours &   224 & $4.96 \times 10^{-2}$ & $ 4.22 \times 10^{-4}$ & $1.78 \times 10^6$ & $1.42 \times 10^3$ & 1.15 & 0.016 & 28.56 & 0.13\\
		&  &PL2PL\cite{geneva2018lips} & 1000   & 0.58 & 0.28  & $1.78 \times 10^6$ & $1.98 \times 10^4$ & 0 & 0.024 & 165.64 & 0.17\\
		\hline
		\multirow{7}{*}{dataset2}
		& \multirow{3}{*}{level 1} 
		& Ours & 75 & $4.88 \times 10^{-2}$ & $8.57 \times 10^{-5}$ & $7.27 \times 10^5$ & $3.88 \times 10^3$ & 2.88 & 0.060 & 37.90 & 0.51\\
		& & DPT2PL & 75 & $4.88 \times 10^{-2}$ & $8.57 \times 10^{-5}$ & $7.27 \times 10^5$ & $3.88 \times 10^3$ & 0 & 16.46 & 1843.87 & 24.58\\
		&  &PL2PL\cite{geneva2018lips} &  203 & 0.95 & 0.16 & $7.27 \times 10^5$ & $8.46 \times 10^3$ & 0 & 0.11 & 219.45 & 1.08\\
		\cline{2-12}
		& \multirow{2}{*}{level 2} 
		& Ours &  119 & $5.76 \times 10^{-2}$ & $2.32 \times 10^{-4}$  & $4.38 \times  10^7$ &  $3.90 \times 10^3$ & 2.89 & 0.068 & 63.62 & 0.53  \\
		&  &PL2PL\cite{geneva2018lips} &  384 & 1.09  &  0.20&  $4.38 \times 10^7$ & $1.35 \times 10^4$ & 0 & 0.11 & 413.91 & 1.07\\
		\cline{2-12}
		& \multirow{2}{*}{level 3} 
		& Ours &  723 &  $7.61 \times 10^{-2}$ & $1.02 \times 10^{-2}$ & $8.38 \times 10^7$ &  $4.23 \times 10^3$ & 2.86 & 0.062 & 365.70 & 0.51\\
		&  &PL2PL\cite{geneva2018lips} &   186 & 2.56  & 0.97 & $8.38 \times 10^7$ & $1.32 \times 10^6$& 0 & 0.12 &  195.56 & 1.05\\
		\hline
		\multirow{7}{*}{dataset3}  
		& \multirow{3}{*}{level 1} 
		& Ours &   503 & $4.28 \times 10^{-2}$& $3.17 \times 10^{-4}$ & $2.81 \times 10^7$ & $1.28 \times10^4$ &  11.14  & 0.17 & $4.87 \times 10^2$  &  0.97 \\
		&  &DPT2PL &  503 & $4.28 \times 10^{-2}$& $3.17 \times 10^{-4}$ & $2.81 \times 10^7$ & $1.28 \times10^4$ &  0 & 77.03 & $5.22 \times 10^4$  & 103.69  \\
		&  &PL2PL\cite{geneva2018lips} &  1000 & 0.90& 0.41 & $2.81 \times 10^7$  & $4.00 \times 10^5$ & 0 & 0.35& $2.45\times 10^3$ & 2.45\\ \cline{2-12}
		& \multirow{2}{*}{level 2} 
		& Ours &  883 & $6.56 \times 10^{-2}$ & $2.32 \times 10^{-2}$ &  $5.99 \times 10^8$ & $1.39 \times 10^4$ & 10.89 & 0.16 & $8.38 \times 10^2$ & 0.95\\
		& &PL2PL\cite{geneva2018lips} & 1000 & 0.95  &  0.47 &  $5.99 \times 10^8$ & $7.02 \times 10^5$ & 0 &  0.36& $2.49 \times 10^3$  & 2.49\\ \cline{2-12}
		& \multirow{2}{*}{level 3} 
		& Ours &  13,385 & $8.85 \times 10^{-2}$ & $6.22 \times 10^{-2}$& $3.97 \times 10^9$& $1.40 \times 10^4$ & 11.32 & 0.17 &  $1.42 \times 10^4$ & 1.06 \\
		& & PL2PL\cite{geneva2018lips}  & 15,000 & 8.66  & 1.34  & $3.97 \times 10^9$ & $1.02 \times 10^7$& 0 & 0.33 & $3.77 \times 10^4$ & 2.51
		\\
		\bottomrule
	\end{tabular}%
\end{table*}

\section{Experiments}

In this section, we  evaluate the performance of our algorithm.   We compare our algorithm against the direct solution for the point-to-plane cost (\ref{equ_cost}) using the traditional BA framework (\textbf{DPT2PL}), and the state-of-the-art solution \cite{geneva2018lips}  using   plane-to-plane cost (\textbf{PL2PL}). We evaluate  the accuracy, computational time, convergence speed and the robustness to  errors of the initialization in the compared algorithms. 

\textbf{Datasets} \quad We collected 3 indoor datasets using the NavVis M6 device \footnote{https://www.navvis.com/m6}. The NavVis M6 estimates the device pose using  1 multi-layer  Velodyne LiDAR, 3 single-layer LiDAR,  IMU, WiFi signals as well as ground control points. It provides an accurate trajectory and a dense point cloud with  an accuracy on the order of centimeters. We use the recordings of the Velodyne LiDAR \footnote{https://velodynelidar.com/} and the trajectory from the NavVis M6 to build the datasets. We sampled the trajectory so that  the distance between two poses is larger than $5cm$.
Planes are detected for each recording of the Velodyne LiDAR by the region-growing method introduced in \cite{poppinga2008fast}. Then we use the known pose to get the plane-plane data association. Specifically,  the global planes are initialized by the planes detected in the first pose. We track and grow the global planes frame by frame. Local planes of  the latest frame are first transformed into the global coordinate system using the known pose.  We calculate the  distances between   points of a local plane  and each of the global planes, and then we  select the global plane that has the shortest mean point-to-plane distance.  A match occurs when  a local plane and the closest global plane have a mean point-to-plane distance   smaller than $5cm$, and the angle between the normals of the two planes is less than 10\si{\degree}. The points  of  the local plane whose distances are less than $5cm$ are then  added into the global plane, and the parameters of the global plane are then updated. Unmatched local planes that have more than 50 points are recognized as new planes, and are  added into the global plane list for  tracking during future frames.  \autoref{fig:dataset} shows the 3 datasets. The characteristics of the 3 datasets are listed in \autoref{tab:datasets}.

\textbf{Initialization Error} \quad The LM algorithm requires an initial set of values for the unknowns.  We consider  how the error in  the initial estimation affects the performance of the different algorithms.  
Specifically, we perturb the rotation matrix by an error  rotation matrix represented by the Euler angles which are sampled from a zero-mean Gaussian distribution with standard deviation (STD) $\delta_{\mathbf{R}}$. Additionally, we add zero-mean Gaussian noise with STD $\delta_{\mathbf{t}}$ to the translation.  We add  noise to the trajectory as follows. To simplify the notation, we use $\mathbf{T} = \begin{bmatrix}
\mathbf{R} & \mathbf{t} \\
\mathbf{0} & 1
\end{bmatrix}$  to represent a rigid transformation. We first consider adding noise to two consecutive poses $\mathbf{T}_i$ and $\mathbf{T}_{i+1}$.
 Assume $\mathbf{T}_i^{i+1}$ is the relative pose between $\mathbf{T}_i$ and $\mathbf{T}_{i+1}$, and $\hat{\mathbf{T}}_i $ and $\hat{\mathbf{T}}_{i+1}$ are the perturbed poses, respectively. We perturb $\mathbf{T}_i$ and $\mathbf{T}_{i+1}$ by $\mathbf{T}_i^{err}$ and $\mathbf{T}_{i+1}^{err}$, respectively,  so that $\hat{\mathbf{T}}_i = \mathbf{T}_i^{err}\mathbf{T}_i$ and $\hat{\mathbf{T}}_{i+1} = \mathbf{T}_{i+1}^{err}\mathbf{T}_i^{i+1}\hat{\mathbf{T}}_i$. It  is clear that the noise $\mathbf{T}_i^{err}$ on $\mathbf{T}_{i}$ will also affect $\mathbf{T}_{i+1}$. We start this process from $i=1$. Thus, the noise  will accumulate along the trajectory. We consider three noise levels  as listed below:
\begin{itemize}
	\item Noise Level 1: $\delta_{\mathbf{R}} = 0.1$\si{\degree}, $\delta_{\mathbf{t}} = 0.01m$
	\item Noise Level 2: $\delta_{\mathbf{R}} = 0.5$\si{\degree}, $\delta_{\mathbf{t}} = 0.03m$
	\item Noise Level 3: $\delta_{\mathbf{R}} = 1.0$\si{\degree}, $\delta_{\mathbf{t}} = 0.05m$
\end{itemize}
Although the noise is small in terms of one pose, the noise  accumulating along the trajectory  may yield a large error. Generally, a longer trajectory will lead to a larger error. \autoref{fig:results} shows the perturbed as well as  the original trajectories for the 3 datasets. We use the perturbed trajectory to initialize the poses.  For  planes,  we first calculate their local parameters from the local point clouds at the first poses that observe them. We initialize the global plane parameters by transforming the local parameters into the global coordinate system using the perturbed poses.  

\textbf{Experiment Setup} \quad In the experiments, all  algorithms use the same parameterizations for the  rotation matrix and the plane, \textit{i.e.}, angle-axis and CP  parameterization \cite{geneva2018lips}, respectively. We ran the experiments on a computer with an Intel(R) Xeon(R)  E5-2620 2.10GHz CPU and 80G memory. We adopted Ceres \cite{ceres-solver} as the nonlinear least-squares solver, and  used the  Schur complement trick \cite{triggs1999bundle} to solve the linear system (\ref{equ_lm_block}) (specifically, SPARSE\_SCHUR linear solver in Ceres).  We set the function  and parameter tolerance to $10^{-10}$, and set the maximum number of iterations to 1000, except for the dataset3 at noise level 3. In this case, the initial error is very large, and more iterations are required. We set the maximum number of iterations to 15,000 for dataset3 at noise level 3.  

We use the perturbed trajectory to initialize the LM algorithm, and compare the original trajectory with the results from the LM algorithm. We employ the absolute trajectory error (ATE) \cite{zhang2018tutorial} to quantify the accuracy of the result. Specifically, for the $k$th pose, given the ground truth $\mathbf{R}_k$  and $\mathbf{t}_k$ and  the estimation $\hat{\mathbf{R}}_k$  and $\hat{\mathbf{t}}_k$, we calculate  $\Delta\mathbf{R}_k$ and $\Delta\mathbf{t}_k$ as
\begin{equation}
\begin{split}
\Delta\mathbf{R}_k  & = \mathbf{R}_k\hat{\mathbf{R}}_k^{T},  \quad
\Delta\mathbf{t}_k   =  \mathbf{t}_k -  \Delta\mathbf{R}_k \hat{\mathbf{t}}_k.
\end{split}
\end{equation}
The ATE is calculated as
\begin{equation}
\begin{split}
\text{ATE}_{\mathbf{R}}  = \left( \frac{1}{N}\sum_{k=1}^{N}\left\| \angle\left( \Delta\mathbf{R}_k \right) \right\|^{2}\right) ^{\frac{1}{2}},  \
\text{ATE}_{\mathbf{t}}  = \left( \frac{1}{N}\sum_{k=1}^{N}\left\| \Delta\mathbf{t}_k  \right\|^{2}\right) ^{\frac{1}{2}},
\end{split}
\end{equation}
where $\angle\left( \cdot\right) $ is the angle of the angle-axis representation of  $\Delta\mathbf{R}_k$.


\textbf{Results} \quad  It is clear that our algorithm is more accurate and more robust to the initialization error than  PL2PL\cite{geneva2018lips}.  PL2PL converges with a smaller number of iterations (186 iterations) than our algorithm (723 iterations) for dataset2 at noise level 3. However, PL2PL probably converges to a local minimum, as PL2PL generates bad results for this input, as shown in Fig. 5. Except for this case, our algorithm converges faster. In addition, our algorithm has lower computational complexity than PL2PL\cite{geneva2018lips}. One point-to-plane cost in (\ref{equ_cost}) only involves 2 variables. But one relative plane cost  in \cite{geneva2018lips}    involves  3 variables (2 poses and 1 plane). Thus the Jacobian matrix of  \cite{geneva2018lips} has  more non-zero items than our reduced Jacobian matrix. 
This increases the runtime  for calculating the Jacobian matrix $\mathbf{J}$,   $\mathbf{J}^{T}\mathbf{J}$ and $\mathbf{J}^{T}\bm{\delta} $. For each iteration, our algorithm is more than 2 times faster than \cite{geneva2018lips} on dataset2 and dataset3. In the minimization process, our algorithm is about 74, 49 and  107 times faster than DPT2PL on the three datasets, respectively.
 The extra computational cost for the factorization $\mathbf{C}_{ij} = \mathbf{Q}_{ij}\mathbf{M}_{ij}$ in (\ref{equ_qr}) is marginal compared to the significant gain from the optimization process. 

\section{Conclusion}
In this paper, we have studied the PBA problem. Our main contribution is to present an efficient solution for the PBA problem using the point-to-plane cost. Although the point-to-plane cost involves a large number of constraints, we find the resulting least-squares problem has  special structure.  We prove that we can use  a reduced Jacobian matrix and  residual vector  with 4 rows to replace the original Jacobian matrix and residual vector with $K_{ij}$ rows in the LM algorithm. This reduces the runtime for computing $\mathbf{J}_{ij}$, $\bm{\delta}_{ij}$, $\mathbf{J}_{ij}^{T}\mathbf{J}_{ij}$ and $\mathbf{J}_{ij}^{T}\bm{\delta}_{ij}$  by a factor of $\frac{4}{K_{ij}}$ relative to the brute-force implementation. Our experimental results show that the extra cost of the one-time  factorization is marginal compared to the significant gain from this new formulation. Furthermore, we have verified that our algorithm is faster,  more accurate, and more robust to initial errors compared to the start-of-the-art  formulation using  the plane-to-plane cost for joint poses and planes optimization\cite{geneva2018lips}.

\bibliographystyle{abbrv-doi}

\bibliography{plane_ba}
\end{document}